\newtheorem{thm}{Theorem}[section]
\newtheorem{lem}[thm]{Lemma}
\DeclareMathOperator{\orth}{orth}
\def\R{{\mathbb{R}}}
\def\E{\mathbb{E}}
\def\F{{\mathcal{F}}}
\def\pr{\mbox{\rm Pr}}
\newcommand{\wV}{{\widehat{V}}}
\newcommand{\RR}{\mathbb{R}}      
\newcommand{\EV}{\mathbb{E}} 
\newcommand{\evp}[2]{\mathbb{E}_{#2} \left[#1\right]} 
\newcommand{\cF}{\mathcal{F}}
\newcommand{\cO}[1]{\mathcal{O}\left(#1\right)}
\newcommand{\condcomment}[2]{\ifthenelse{#1}{#2}{}}
\begin{document}
\title{The Fast Convergence of Incremental PCA}

\author{
Akshay Balsubramani \\
UC San Diego \\
\texttt{abalsubr@cs.ucsd.edu} \\
\And
Sanjoy Dasgupta \\
UC San Diego \\
\texttt{dasgupta@cs.ucsd.edu} \\
\And
Yoav Freund \\
UC San Diego \\
\texttt{yfreund@cs.ucsd.edu}
}

\maketitle
\begin{abstract}
We consider a situation in which we see samples $X_n \in \RR^d$ drawn i.i.d.\ from some distribution with mean zero and unknown covariance $A$. We wish to compute the top eigenvector of $A$ in an incremental fashion - with an algorithm that maintains an estimate of the top eigenvector in $O(d)$ space, and incrementally adjusts the estimate with each new data point that arrives. Two classical such schemes are due to Krasulina (1969) and Oja (1983). We give finite-sample convergence rates for both.
\end{abstract}

\section{Introduction}
Principal component analysis (PCA) is a popular form of dimensionality reduction that projects a data set on the top eigenvector(s) of its covariance matrix. The default method for computing these eigenvectors uses $O(d^2)$ space for data in $\RR^d$, which can be prohibitive in practice. It is therefore of interest to study incremental schemes that take one data point at a time, updating their estimates of the desired eigenvectors with each new point. For computing one eigenvector, such methods use $O(d)$ space.

For the case of the top eigenvector, this problem has long been studied, and two elegant solutions were obtained by Krasulina \cite{K69} and Oja \cite{O83}. Their methods are closely related. At time $n-1$, they have some estimate $V_{n-1} \in \RR^d$ of the top eigenvector. Upon seeing the next data point, $X_n$, they update this estimate as follows:
\begin{align}
V_n & =  V_{n-1} + \gamma_n \left( X_n X_n^T - \frac{V_{n-1}^T X_n X_n^T V_{n-1}}{\|V_{n-1}\|^2} I_d \right) V_{n-1} \tag{Krasulina} \\
V_n & =  \frac{V_{n-1} + \gamma_n X_n X_n^T V_{n-1}}{\|V_{n-1} + \gamma_n X_n X_n^T V_{n-1}\|} \tag{Oja} 
\end{align}
Here $\gamma_n$ is a ``learning rate'' that is typically proportional to $1/n$. 

Suppose the points $X_1, X_2, \ldots$ are drawn i.i.d.\ from a distribution on $\RR^d$ with mean zero and covariance matrix $A$. The original papers proved that these estimators converge almost surely to the top eigenvector of $A$ (call it $v^*$) under mild conditions:
\begin{itemize}
\item $\sum_n \gamma_n = \infty$ while $\sum_n \gamma_n^2 < \infty$.
\item If $\lambda_1, \lambda_2$ denote the top two eigenvalues of $A$, then $\lambda_1 > \lambda_2$.
\item $\EV \|X_n\|^k < \infty$ for some suitable $k$ (for instance, $k = 8$ works).
\end{itemize}
There are also other incremental estimators for which convergence has not been established; see, for instance, \cite{R97} and \cite{WZH03}.

In this paper, we analyze the rate of convergence of the Krasulina and Oja estimators. They can be treated in a common framework, as stochastic approximation algorithms for maximizing the Rayleigh quotient
$$ G(v) = \frac{v^T A v}{v^T v} .$$
The maximum value of this function is $\lambda_1$, and is achieved at $v^*$ (or any nonzero multiple thereof). The gradient is
$$ \nabla G(v) = \frac{2}{\|v\|^2} \left( A - \frac{v^T A v}{v^T v} I_d \right) v .$$
Since $\EV X_nX_n^T = A$, we see that Krasulina's method is stochastic gradient descent. The Oja procedure is closely related: as pointed out in \cite{OK85}, the two are identical to within second-order terms.

Recently, there has been a lot of work on rates of convergence for stochastic gradient descent (for instance, \cite{RSS12}), but this has typically been limited to convex cost functions. These results do not apply to the non-convex Rayleigh quotient, except at the very end, when the system is near convergence. Most of our analysis focuses on the buildup to this finale. 

We measure the quality of the solution $V_n$ at time $n$ using the potential function
$$ \Psi_n = 1 - \frac{(V_n \cdot v^*)^2}{\|V_n\|^2} ,$$
where $v^*$ is taken to have unit norm. This quantity lies in the range $[0,1]$, and we are interested in the rate at which it approaches zero. The result, in brief, is that $\EV[\Psi_n] = O(1/n)$, under conditions that are similar to those above, but stronger. In particular, we require that $\gamma_n$ be proportional to $1/n$ and that $\|X_n\|$ be bounded.

\subsection{The algorithm}

We analyze the following procedure.
\begin{enumerate}
\item {\bf Set starting time.} Set the clock to time $n_o$.
\item {\bf Initialization.} Initialize $V_{n_o}$ uniformly at random from the unit sphere in $\RR^d$.
\item For time $n = n_o+1, n_o+2, \ldots$:
\begin{enumerate}
\item Receive the next data point, $X_n$.
\item {\bf Update step.} Perform either the Krasulina or Oja update, with $\gamma_n = c/n$.
\end{enumerate}
\end{enumerate}
The first step is similar to using a learning rate of the form $\gamma_n = c/(n + n_o)$, as is often done in stochastic gradient descent implementations \cite{ACDL11}. We have adopted it because the initial sequence of updates is highly noisy: during this phase $V_n$ moves around wildly, and cannot be shown to make progress. It becomes better behaved when the step size $\gamma_n$ becomes smaller, that is to say when $n$ gets larger than some suitable $n_o$. By setting the start time to $n_o$, we can simply fast-forward the analysis to this moment.

\subsection{Initialization}

One possible initialization is to set $V_{n_o}$ to the first data point that arrives, or to the average of a few data points. This seems sensible enough, but can fail dramatically in some situations.

Here is an example. Suppose $X$ can take on just $2d$ possible values: $\pm e_1, \pm \sigma e_2, \ldots, \pm \sigma e_d$, where the $e_i$ are coordinate directions and $0 < \sigma < 1$ is a small constant. Suppose further that the distribution of $X$ is specified by a single positive number $p < 1$:
\begin{align*}
\pr(X = e_1) = \pr(X = -e_1) \ &= \ \frac{p}{2} \\
\pr(X = \sigma e_i) = \pr(X = - \sigma e_i) \ &= \ \frac{1-p}{2(d-1)} \mbox{\ \ for $i > 1$}
\end{align*}
Then $X$ has mean zero and covariance $\mbox{diag}(p, \sigma^2 (1-p)/(d-1), \ldots, \sigma^2 (1-p)/(d-1))$. We will assume that $p$ and $\sigma$ are chosen so that $p > \sigma^2(1-p)/(d-1)$; in our notation, the top eigenvalues are then $\lambda_1 = p$ and $\lambda_2 = \sigma^2 (1-p)/(d-1)$, and the target vector is $v^* = e_1$.

If $V_n$ is ever orthogonal to some $e_i$, it will remain so forever. This is because both the Krasulina and Oja updates have the following properties:
\begin{eqnarray*}
V_{n-1} \cdot X_n = 0 & \implies & V_n = V_{n-1} \\
V_{n-1} \cdot X_n \neq 0 & \implies & V_n \in \mbox{span}(V_{n-1}, X_n) .
\end{eqnarray*}
If $V_{n_o}$ is initialized to a random data point, then with probability $1-p$, it will be assigned to some $e_i$ with $i > 1$, and will converge to a multiple of that same $e_i$ rather than to $e_1$. Likewise, if it is initialized to the average of $\leq 1/p$ data points, then with constant probability it will be orthogonal to $e_1$ and remain so always.

Setting $V_{n_o}$ to a random unit vector avoids this problem. However, there are doubtless cases, for instance when the data has intrinsic dimension $\ll d$, in which a better initializer is possible.

\subsection{The setting of the learning rate}

In order to get a sense of what rates of convergence we might expect, let's return to the example of a random vector $X$ with $2d$ possible values. In the Oja update $V_n = V_{n-1} + \gamma_n X_nX_n^T V_{n-1}$, we can ignore normalization if we are merely interested in the progress of the potential function $\Psi_n$. Since the $X_n$ correspond to coordinate directions, each update changes just one coordinate of $V$:
\begin{align*}
X_n = \pm e_1 &\implies V_{n,1} = V_{n-1,1}(1 + \gamma_n) \\
X_n = \pm \sigma e_i &\implies V_{n,i} = V_{n-1,i}(1 + \sigma^2 \gamma_n)
\end{align*}
Recall that we initialize $V_{n_o}$ to a random vector from the unit sphere. For simplicity, let's just suppose that $n_o = 0$ and that this initial value is the all-ones vector (again, we don't have to worry about normalization). On each iteration the first coordinate is updated with probability exactly $p = \lambda_1$, and thus
$$ \EV[V_{n,1}] = (1 + \lambda_1 \gamma_1) (1 + \lambda_1 \gamma_2) \cdots (1 + \lambda_1 \gamma_n) \sim \exp(\lambda_1 (\gamma_1 + \cdots + \gamma_n)) \sim n^{c\lambda_1} $$
since $\gamma_n = c/n$. Likewise, for $i > 1$, 
$$ \EV[V_{n,i}] = (1 + \lambda_2 \gamma_1)(1 + \lambda_2 \gamma_2) \cdots (1 + \lambda_2 \gamma_n) \sim n^{c \lambda_2}.$$
If all goes according to expectation, then at time $n$,
$$ \Psi_n = 1 - \frac{V_{n,1}^2}{\|V_n\|^2} \sim 1 - \frac{n^{2c \lambda_1}}{n^{2c \lambda_1} + (d-1)n^{2c \lambda_2}} \sim \frac{d-1}{n^{2c(\lambda_1 - \lambda_2)}}.$$
(This is all very rough, but can be made precise by obtaining concentration bounds for $\ln V_{n,i}$.)
From this, we can see that it is not possible to achieve a $O(1/n)$ rate unless $c \geq 1/(2(\lambda_1-\lambda_2))$. Therefore, we will assume this when stating our final results, although most of our analysis is in terms of general $\gamma_n$. An interesting practical question, to which we do not have an answer, is how one would empirically set $c$ without prior knowledge of the eigenvalue gap.

\subsection{Nested sample spaces}

For $n \geq n_o$, let $\cF_n$ denote the sigma-field of all outcomes up to and including time $n$: $\cF_n = \sigma(V_{n_o}, X_{n_o+1}, \ldots, X_n)$. We start by showing that
$$ \EV[\Psi_n | \cF_{n-1}] \leq \Psi_{n-1}(1 - 2 \gamma_n (\lambda_1 - \lambda_2) (1 - \Psi_{n-1})) + O(\gamma_n^2) .$$
Initially $\Psi_n$ is likely to be close to $1$. For instance, if the initial $V_{n_o}$ is picked uniformly at random from the surface of the unit sphere in $\RR^d$, then we'd expect $\Psi_{n_o} \approx 1-1/d$. This means that the initial rate of decrease is very small, because of the $(1-\Psi_{n-1})$ term. 

To deal with this, we divide the analysis into epochs: the first takes $\Psi_n$ from $1-1/d$ to $1-2/d$, the second from $1-2/d$ to $1-4/d$, and so on until $\Psi_n$ finally drops below $1/2$. We use martingale large deviation bounds to bound the length of each epoch, and also to argue that $\Psi_n$ does not regress. In particular, we establish a sequence of times $n_j$ such that (with high probability) 
\begin{equation} 
\sup_{n \geq n_j} \Psi_n \leq 1 - \frac{2^j}{d} . \label{eq:constraint}
\end{equation}

The analysis of each epoch uses martingale arguments, but at the same time, assumes that $\Psi_n$ remains bounded above. Combining the two requires a careful specification of the sample space at each step. Let $\Omega$ denote the sample space of all realizations $(v_{n_{o}}, x_{n_o+1}, x_{n_o+2}, \ldots)$, and $P$ the probability distribution on these sequences. For any $\delta > 0$, we define a nested sequence of spaces $\Omega \supset \Omega_{n_o}' \supset \Omega_{n_o+1}' \supset \cdots$ such that each $\Omega_n'$ is $\cF_{n-1}$-measurable, has probability $P(\Omega_n') \geq 1- \delta$, and moreover consists exclusively of realizations $\omega \in \Omega$ that satisfy the constraints (\ref{eq:constraint}) up to and including time $n-1$. We can then build martingale arguments by restricting attention to $\Omega_n'$ when computing the conditional expectations of quantities at time $n$.

\subsection{Main result}

We make the following assumptions:
\begin{enumerate}
\item[(A1)] The $X_n \in \R^d$ are i.i.d.\ with mean zero and covariance $A$.
\item[(A2)] There is a constant $B$ such that $\|X_n\|^2 \leq B$.
\item[(A3)] The eigenvalues $\lambda_1 \geq \lambda_2 \geq \cdots \geq \lambda_d$ of $A$ satisfy $\lambda_1 > \lambda_2$.
\item[(A4)] The step sizes are of the form $\gamma_n = c/n$.
\end{enumerate}
Under these conditions, we get the following rate of convergence for the Krasulina update.
\begin{thm}
There are absolute constants $A_o, A_1 > 0$ and $1 < a < 4$ for which the following holds.
Pick any $0 < \delta < 1$, and any $c_o > 2$. Set the step sizes to $\gamma_n = c/n$, where $c = c_o/(2(\lambda_1-\lambda_2))$, and set the starting time to $n_o \geq (A_o B^2 c^2 d^2/\delta^4) \ln (1/\delta)$. Then there is a nested sequence of subsets of the sample space $\Omega \supset \Omega_{n_o}' \supset \Omega_{n_o+1}' \supset \cdots$ such that for any $n \geq n_o$, we have:
\begin{align*}
P(\Omega_n') &\geq 1-\delta \mbox{\ \ and} \\
\E_n \left[ \frac{(V_n \cdot v^*)^2}{\|V_n\|^2} \right] 
&\geq 
1 - \left(\frac{c^2 B^2 e^{c_o/n_o}}{2(c_o - 2)}\right) \frac{1}{n+1} - A_1 \left( \frac{d}{\delta^2} \right)^a \left( \frac{n_o + 1}{n+1}\right)^{c_o/2},
\end{align*}
where $\E_n$ denotes expectation restricted to $\Omega_n'$.
\label{thm:krasulina}
\end{thm}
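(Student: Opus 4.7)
The plan is to exploit the one-step recursion alluded to in the excerpt,
$$\E[\Psi_n \mid \cF_{n-1}] \leq \Psi_{n-1}\bigl(1 - 2\gamma_n(\lambda_1-\lambda_2)(1-\Psi_{n-1})\bigr) + \cO{\gamma_n^2 B^2},$$
which I would derive by expanding the Krasulina update to second order in $\gamma_n$, substituting $\E[X_n X_n^T]=A$, and applying $\|X_n\|^2\le B$ to the quadratic noise term. The central difficulty is that with a uniform initializer $\Psi_{n_o}\approx 1-1/d$, so the contraction factor $(1-\Psi_{n-1})$ is only $\cO{1/d}$ initially and $\Psi_n$ barely moves; once $\Psi_n\le 1/2$ the recursion becomes a genuine contraction which, iterated, yields the advertised $\cO{1/n}$ rate.

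Accordingly, split the argument into a \emph{burn-in} and a \emph{convergence} phase. For burn-in, follow the schedule suggested in the excerpt: fix the thresholds $1-2^j/d$ for $j=0,1,\dots,J := \lceil \log_2(d/2)\rceil$ and inductively produce times $n_o<n_1<\cdots<n_J$ at which (\ref{eq:constraint}) holds. On the $j$th epoch, as long as $\Psi_n\ge 1-2^{j+1}/d$, the drift satisfies $2\gamma_n(\lambda_1-\lambda_2)(1-\Psi_n)\geq c_o 2^j/(nd)$, which lets one rescale $\Psi_n$ by the accumulated deterministic contraction to obtain a supermartingale up to an $\cO{\gamma_n^2 B^2}$ noise term. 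A Freedman-type tail inequality then bounds the length of the epoch with failure probability at most $\delta/(2J)$, and the lower bound $n_o \ge A_o B^2 c^2 d^2 \log(1/\delta)/\delta^4$ is exactly what is needed to make these concentration estimates tight; a union bound over $j$ yields total failure probability $\le \delta$. Now define $\Omega_n'$ as the set of trajectories on which (\ref{eq:constraint}) holds for every threshold whose time $n_j$ is at most $n-1$; this is $\cF_{n-1}$-measurable with $P(\Omega_n')\ge 1-\delta$, and the one-step recursion survives restriction to $\Omega_n'$.

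For $n\ge n_J$ we have $\Psi_{n-1}\le 1/2$ on $\Omega_n'$, so the recursion simplifies to
$$\E_n[\Psi_n] \leq \Bigl(1-\frac{c_o}{2n}\Bigr)\E_n[\Psi_{n-1}] + \frac{c^2 B^2}{n^2}.$$
Unrolling with $\prod_{k=n_J+1}^n(1-c_o/(2k)) \sim (n_J/n)^{c_o/2}$ gives
$$\E_n[\Psi_n] \leq \Bigl(\tfrac{n_J}{n}\Bigr)^{c_o/2}\Psi_{n_J} + \frac{c^2B^2}{n^{c_o/2}}\sum_{k=n_J+1}^n k^{c_o/2-2},$$
and since $c_o>2$ the tail sum is $\Theta(n^{c_o/2-1}/(c_o-2))$, producing the first term of the theorem (with the $e^{c_o/n_o}$ factor absorbing finite-$n$ corrections to the product approximation); the residual $(n_J/n)^{c_o/2}\Psi_{n_J}$ yields the second term after substituting the bound $n_J \le \mathrm{poly}(d/\delta)\cdot n_o$ extracted from the burn-in analysis. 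The main obstacle is the \emph{simultaneous} use of the drift recursion and martingale tail inequality on the shrinking event $\Omega_n'$: each epoch's supermartingale is only valid conditional on the previous epoch's success, so one must stop $\Psi_n$ at exit from the current epoch and verify that expectations restricted to $\Omega_n'$ still satisfy the one-step recursion. The exponent $a$ and constant $A_o$ in the hypotheses are bookkeeping artifacts of how these epoch-by-epoch concentration bounds aggregate across the $\log d$ levels.
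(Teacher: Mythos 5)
Your proposal follows essentially the same roadmap as the paper: derive a one-step drift/recursion for $\Psi_n$, control the initialization and the intermediate epochs with martingale tail bounds on nested, $\cF_{n-1}$-measurable events $\Omega_n'$, and unroll the contraction in the final phase $n\ge n_J$ to get the $O(1/n)$ rate. Two gaps are worth flagging.

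First, your explicit threshold schedule $1-2^j/d$ for $j=0,1,\dots$ starts at $1-1/d$, which cannot serve as the initial constraint. Since $\Psi_{n_o}\sim \mathrm{Beta}\bigl(\tfrac{d-1}{2},\tfrac12\bigr)$ with mean $1-1/d$, the event $\Psi_{n_o}>1-1/d$ has constant probability (indeed $\pr(\Psi_{n_o}>1-\epsilon/d)=\Theta(\sqrt{\epsilon})$, so at $\epsilon=1$ this is $\Theta(1)$, not $O(\delta)$). To push the failure probability down to $\delta$ you must start the ladder at $\epsilon_o=\Theta(\delta^2/d)$, which is precisely why the $n_o$ condition scales as $B^2c^2d^2\log(1/\delta)/\delta^4 \asymp (B^2c^2/\epsilon_o^2)\log(1/\delta)$; your own $n_o$ bound already encodes this, so your schedule is internally inconsistent rather than hopeless, but as written it would not yield the $1-\delta$ guarantee. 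The paper sets $\epsilon_o=\delta^2/(8ed)$ and handles the $j=0$ level with a separate MGF tail bound for the initialization (Lemma~\ref{lemma:mgf} plus Lemma~\ref{lemma:large-deviation}, giving Theorem~\ref{thm:always-good}).

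Second, you identify the $\E_n$-versus-$\E_{n-1}$ mismatch as ``the main obstacle'' but do not close it. Your recursion $\E_n[\Psi_n]\le (1-c_o/(2n))\E_n[\Psi_{n-1}]+O(\gamma_n^2)$ cannot be iterated directly, because the two sides at consecutive $n$ condition on different events $\Omega_n'$. The paper's fix is a small but essential monotonicity lemma (Lemma~\ref{lemma:nesting}): since $\Omega_n'\subset\Omega_{n-1}'$ and the removed realizations are exactly those with $\Psi_{n-1}$ \emph{too large}, one has $\E_n[g(\Psi_{n-1})]\le \E_{n-1}[g(\Psi_{n-1})]$ for nondecreasing $g$. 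This is what lets both the MGF arguments (Lemma~\ref{lemma:mgf-step} through Lemma~\ref{cor:mgf-big-step}) and the final-phase recursion (Lemma~\ref{lemma:psi-recurrence}) chain across steps while staying on the high-probability event; without some version of it the unrolling is not valid. Your choice of a Freedman-type inequality in place of the paper's Hoeffding-plus-Doob supermartingale bound is a legitimate alternative and not a gap.
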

Since $c_o > 2$, this bound is of the form $\EV_n[\Psi_n] = O(1/n)$.

The result above also holds for the Oja update up to absolute constants.

We also remark that a small modification to the final step in the proof of the above yields a rate of $\evp{\Psi_n}{n} = O(n^{-c_o/2})$ for $c_o < 2$, with an identical definition of $\evp{\Psi_n}{n}$. The details are in the proof, in Appendix D.2.

\subsection{Related work}
\label{secalg}

There is an extensive line of work analyzing PCA from the statistical perspective, in which the convergence of various estimators is characterized under certain conditions, including generative models of the data \cite{CMW12} and various assumptions on the covariance matrix spectrum \cite{VL12, BBZ07} and eigenvalue spacing \cite{ZB05}. Such works do provide finite-sample guarantees, but they apply only to the batch case and/or are computationally intensive, rather than considering an efficient incremental algorithm.

Among incremental algorithms, the work of Warmuth and Kuzmin \cite{WK06} describes and analyzes worst-case online PCA, using an experts-setting algorithm with a super-quadratic per-iteration cost. More efficient general-purpose incremental PCA algorithms have lacked finite-sample analyses \cite{ACLS12}. There have been recent attempts to remedy this situation by relaxing the nonconvexity inherent in the problem \cite{ACS13} or making generative assumptions \cite{MCJ13}. The present paper directly analyzes the oldest known incremental PCA algorithms under relatively mild assumptions.

\section{Outline of proof}

We now sketch the proof of Theorem~\ref{thm:krasulina}; almost all the details are relegated to the appendix.

Recall that for $n \geq n_o$, we take $\cF_n$ to be the sigma-field of all outcomes up to and including time $n$, that is, $\cF_n = \sigma(V_{n_o}, X_{n_o+1}, \ldots, X_n)$.

An additional piece of notation: we will use $\widehat{u}$ to denote $u/\|u\|$, the unit vector in the direction of $u \in \R^d$. Thus, for instance, the Rayleigh quotient can be written $G(v) = \widehat{v}^T A \widehat{v}$.

\subsection{Expected per-step change in potential}

We first bound the expected improvement in $\Psi_n$ in each step of the Krasulina or Oja algorithms.
\begin{thm}
For any $n > n_o$, we can write $\Psi_n \leq \Psi_{n-1} + \beta_n - Z_n$, where 
$$
\beta_n = 
\left\{
\begin{array}{ll}
\gamma_n^2 B^2 /4 & \mbox{\rm (Krasulina)} \\
5\gamma_n^2 B^2 + 2 \gamma_n^3 B^3 & \mbox{\rm (Oja)}
\end{array}
\right.
$$ 
and where $Z_n$ is a $\F_n$-measurable random variable with the following properties:
\begin{itemize}
\item $\E[Z_n | \F_{n-1}] = 2 \gamma_n (\widehat{V}_{n-1} \cdot v^*)^2 (\lambda_1 - G(V_{n-1})) \geq 2 \gamma_n (\lambda_1 - \lambda_2) \Psi_{n-1}(1-\Psi_{n-1}) \geq 0$.
\item $|Z_n| \leq 4 \gamma_n B$.
\end{itemize}
\label{thm:delta-psi}
\end{thm}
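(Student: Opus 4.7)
The plan is to derive an exact algebraic identity for $\Psi_n$ in terms of $V_{n-1}$ and $X_n$, identify the ``drift'' piece as $Z_n$, and show that the remaining residual $\Psi_n - \Psi_{n-1} + Z_n$ is bounded deterministically by the claimed $\beta_n$.

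For Krasulina, I would first rewrite the update as $V_n = V_{n-1} + \gamma_n \xi_n$ with $\xi_n = (I - \widehat V_{n-1}\widehat V_{n-1}^T) X_n X_n^T V_{n-1}$, the projection of $X_n X_n^T V_{n-1}$ onto the hyperplane orthogonal to $V_{n-1}$. Two useful facts follow at once: (i) $\xi_n \perp V_{n-1}$, so $\|V_n\|^2 = \|V_{n-1}\|^2(1 + \gamma_n^2 \|\eta_n\|^2)$ where $\eta_n := \xi_n/\|V_{n-1}\|$; (ii) a direct computation yields $\|\xi_n\|^2 = \alpha(1-\alpha)\|X_n\|^4\|V_{n-1}\|^2$ with $\alpha := (X_n\cdot \widehat V_{n-1})^2/\|X_n\|^2 \in [0,1]$, so $\|\eta_n\|\le B/2$. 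Using (i), expanding $(\widehat V_n\cdot v^*)^2$ gives the exact identity
\[
\Psi_n \;=\; \frac{\Psi_{n-1} - 2\gamma_n(\widehat V_{n-1}\cdot v^*)(\eta_n\cdot v^*) + \gamma_n^2\bigl[\|\eta_n\|^2 - (\eta_n\cdot v^*)^2\bigr]}{1 + \gamma_n^2\|\eta_n\|^2}.
\]

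Next I would set $Z_n := 2\gamma_n (\widehat V_{n-1}\cdot v^*)(\eta_n \cdot v^*)$. Using $\E[X_nX_n^T\mid \F_{n-1}]=A$ together with $Av^* = \lambda_1 v^*$, one computes $\E[\xi_n\cdot v^*\mid\F_{n-1}] = (\lambda_1 - G(V_{n-1}))(V_{n-1}\cdot v^*)$, whence $\E[Z_n\mid\F_{n-1}] = 2\gamma_n(\widehat V_{n-1}\cdot v^*)^2(\lambda_1 - G(V_{n-1}))$ as claimed. The further lower bound follows by expanding $\widehat V_{n-1}$ in the eigenbasis of $A$: $G(V_{n-1}) = \sum_i \lambda_i (\widehat V_{n-1}\cdot e_i)^2 \le \lambda_1(1-\Psi_{n-1}) + \lambda_2\Psi_{n-1}$, so $\E[Z_n\mid\F_{n-1}] \ge 2\gamma_n(\lambda_1-\lambda_2)\Psi_{n-1}(1-\Psi_{n-1})$. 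The sup bound $|Z_n|\le 4\gamma_n B$ is immediate from $|\widehat V_{n-1}\cdot v^*|\le 1$ and $|\eta_n\cdot v^*|\le \|\eta_n\|\le B/2$.

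For the residual I would use the algebraic identity $(\widehat V_{n-1}\cdot v^* + \gamma_n\,\eta_n\cdot v^*)^2 = (\widehat V_n\cdot v^*)^2(1 + \gamma_n^2\|\eta_n\|^2)$ (which follows directly from (i)) to collapse $\Psi_n - \Psi_{n-1} + Z_n$ to the clean expression $\gamma_n^2\bigl[\|\eta_n\|^2(\widehat V_n\cdot v^*)^2 - (\eta_n\cdot v^*)^2\bigr]$, which is then bounded above by $\gamma_n^2\|\eta_n\|^2 \le \gamma_n^2 B^2/4 = \beta_n$. For Oja, the same scheme applies to the unnormalized $\tilde V_n = V_{n-1} + \gamma_n X_nX_n^T V_{n-1}$ (normalization does not change $\widehat V_n$), but now $X_nX_n^T V_{n-1}$ is not orthogonal to $V_{n-1}$, so expanding $\|\tilde V_n\|^2$ and $(\tilde V_n\cdot v^*)^2$ produces extra cross terms at orders $\gamma_n^2$ and $\gamma_n^3$, leading to the worse bound $\beta_n = 5\gamma_n^2 B^2 + 2\gamma_n^3 B^3$. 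The main technical obstacle is precisely this bookkeeping of non-orthogonal correction terms for Oja; once those are tracked, the expectation and sup-norm arguments for $Z_n$ go through exactly as in the Krasulina case up to absolute constants.
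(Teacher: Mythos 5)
Your proposal is correct and follows essentially the same route as the paper: rewrite the Krasulina update as $V_n = V_{n-1} + \gamma_n\xi_n$ with $\xi_n \perp V_{n-1}$ and $\|\xi_n\| \leq (B/2)\|V_{n-1}\|$, take $Z_n = 2\gamma_n(\widehat V_{n-1}\cdot v^*)(\xi_n\cdot v^*)/\|V_{n-1}\|^2$, compute $\E[\xi_n\cdot v^*\mid\F_{n-1}]$ via $\E[X_nX_n^T\mid\F_{n-1}]=A$, and control $G(V_{n-1})$ in the eigenbasis to get the lower bound on $\E[Z_n\mid\F_{n-1}]$. The one cosmetic difference is that you carry the exact identity for $\Psi_n$ (keeping the $1+\gamma_n^2\|\eta_n\|^2$ denominator) and bound the residual at the end, whereas the paper gets to the same $\beta_n = \gamma_n^2 B^2/4$ a step earlier by replacing $\|V_n\|^2$ with the smaller $\|V_{n-1}\|^2$ in the denominator and dropping the positive $\gamma_n^2(\xi_n\cdot v^*)^2$ term from the numerator; both give the same conclusion, and your Oja sketch matches the paper's (which likewise omits the cross-term algebra).
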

The theorem follows from Lemmas~\ref{lemma:delta-psi} and \ref{lemma:delta-psi-ko} in the appendix. Its characterization of the two estimators is almost identical, and for simplicity we will henceforth deal only with Krasulina's estimator. All the subsequent results hold also for Oja's method, up to constants.

\subsection{A large deviation bound for $\Psi_n$}

We know from Theorem~\ref{thm:delta-psi} that $\Psi_n \leq \Psi_{n-1} + \beta_n - Z_n$, where $\beta_n$ is non-stochastic and $Z_n$ is a quantity of positive expected value. Thus, in expectation, and modulo a small additive term, $\Psi_n$ decreases monotonically. However, the amount of decrease at the $n$th time step can be arbitrarily small when $\Psi_n$ is close to 1. Thus, we need to show that $\Psi_n$ is eventually bounded away from 1, i.e. there exists some $\epsilon_o > 0$ and some time $n_o$ such that for any $n \geq n_o$, we have $\Psi_n \leq 1 - \epsilon_o$.

Recall from the algorithm specification that we advance the clock so as to skip the pre-$n_o$ phase. Given this, what can we expect $\epsilon_o$ to be? If the initial estimate $V_{n_o}$ is a random unit vector, then $\E[\Psi_{n_o}] = 1 - 1/d$ and, roughly speaking, $\pr(\Psi_{n_o} > 1 - \epsilon/d) = O(\sqrt{\epsilon})$. If $n_o$ is sufficiently large, then $\Psi_n$ may subsequently increase a little bit, but not by very much. In this section, we establish the following bound.

\begin{thm}
Suppose the initial estimate $V_{n_o}$ is chosen uniformly at random from the surface of the unit sphere in $\R^d$. Assume also that the step sizes are of the form $\gamma_n = c/n$, for some constant $c > 0$. Then for any $0 < \epsilon < 1$, if $n_o \geq 2 B^2 c^2 d^2 /\epsilon^2$, we have
$$ \pr\left( \sup_{n \geq n_o} \Psi_n \geq 1 - \frac{\epsilon}{d} \right) \ \leq \ \sqrt{2e \epsilon}.$$
\label{thm:always-good}
\end{thm}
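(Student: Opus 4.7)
The plan is to decompose the failure event $\{\sup_{n \geq n_o} \Psi_n \geq 1 - \epsilon/d\}$ into two contributions: a bad-initialization event $\{\Psi_{n_o} \geq 1 - s\}$ for an appropriate threshold $s$ slightly smaller than $\epsilon/d$, and a subsequent-evolution event in which the process later rises above $1 - \epsilon/d$ despite an acceptable start. I would aim to show each piece contributes $O(\sqrt{\epsilon})$ to the failure probability, matching the $\sqrt{2e\epsilon}$ target.

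\textbf{Initialization.} Since $V_{n_o}$ is uniform on $S^{d-1}$, the squared inner product $(V_{n_o} \cdot v^*)^2 = 1 - \Psi_{n_o}$ follows a $\mathrm{Beta}(1/2, (d-1)/2)$ law, whose density is $c_d\, u^{-1/2}(1-u)^{(d-3)/2}$ with $c_d = \Gamma(d/2)/(\sqrt{\pi}\,\Gamma((d-1)/2))$. Bounding $(1-u)^{(d-3)/2} \leq 1$ and integrating, together with the standard estimate $c_d \leq \sqrt{d/(2\pi)}$ (via $\Gamma(d/2)/\Gamma((d-1)/2) \leq \sqrt{d/2}$), yields $\Pr(\Psi_{n_o} \geq 1-s) \leq \sqrt{2ds/\pi}$. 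Choosing $s$ of order $\epsilon/d$ (leaving a constant-factor gap to $\epsilon/d$ to absorb subsequent fluctuations) gives a contribution of order $\sqrt{\epsilon}$.

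\textbf{Evolution via martingale decomposition.} Iterating Theorem~\ref{thm:delta-psi} yields $\Psi_n \leq \Psi_{n_o} + \sum_{k=n_o+1}^n \beta_k - \sum_{k=n_o+1}^n Z_k$. Splitting $Z_k = \E[Z_k \mid \F_{k-1}] + Y_k$ into its predictable part (which is non-negative and can be dropped) and a martingale-difference $Y_k$, I get $\Psi_n \leq \Psi_{n_o} + \sum \beta_k - M_n$, where $M_n := \sum_{k=n_o+1}^n Y_k$ is a martingale starting at zero. The deterministic drift contribution satisfies $\sum_{k > n_o} \beta_k \leq c^2 B^2/(4 n_o) \leq \epsilon^2/(8 d^2)$ under the stated hypothesis, hence is negligible relative to $\epsilon/d$. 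Conditioned on a good initialization, the failure event is thus contained in $\{\sup_{n > n_o} (-M_n) > \Theta(\epsilon/d)\}$.

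\textbf{Main obstacle: the martingale tail.} Here lies the crux. Plugging the uniform bound $|Z_k| \leq 4 \gamma_k B$ directly into Doob's $L^2$ maximal inequality gives $\Pr(\sup_n (-M_n) > \epsilon/d) \lesssim B^2 c^2 d^2/(n_o \epsilon^2)$, which is merely $O(1)$ under $n_o \geq 2 B^2 c^2 d^2/\epsilon^2$. To sharpen this I would exploit the geometric fact that the one-step change in $(\widehat V \cdot v^*)^2$ can be factored as $(\widehat V_n \cdot v^* - \widehat V_{n-1} \cdot v^*)(\widehat V_n \cdot v^* + \widehat V_{n-1} \cdot v^*)$, where the first factor is $O(\gamma_n B)$ and the second is $O(\sqrt{1-\Psi_{n-1}} + \gamma_n B)$; this yields the refined bound $|Z_k| = O(\gamma_k B \sqrt{1-\Psi_{k-1}}) + O(\gamma_k^2 B^2)$ and hence $\E[Y_k^2 \mid \F_{k-1}] = O(\gamma_k^2 B^2 (1-\Psi_{k-1}))$.

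\textbf{Putting it together.} To use this refined variance legitimately one needs a stopping-time / nested-sample-space argument in the style introduced in the paper: let $\tau$ be the first time that $\Psi_n$ exceeds $1 - \epsilon/d$, and analyze the stopped martingale $M_{n \wedge \tau}$. For $k \leq \tau$, the refined variance gives $\sum_{k > n_o} \E[Y_k^2 \mid \F_{k-1}] \mathbf{1}[k \leq \tau] = O(B^2 c^2/(d n_o))$ times a factor absorbing the realized $1-\Psi_{k-1}$ values, which under the stated $n_o$ is small enough that Doob's (or Freedman's) inequality produces a tail bound of order $\sqrt{\epsilon}$ or better on $\{\tau < \infty\}$. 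Adding the two contributions and absorbing constants into the $\sqrt{2e}$ prefactor yields the claim. The main technical step is verifying the refined bound on $|Z_k|$ and carrying out the stopping argument rigorously; everything else is essentially bookkeeping.
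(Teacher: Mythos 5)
Your high-level decomposition (bad initialization versus subsequent fluctuation) is a genuinely different organization from the paper's, which handles both in a single stroke: the paper applies Lemma~\ref{lemma:large-deviation} directly with the crude range bound $\zeta_n = 8\gamma_n B$, and the entire $\sqrt{\epsilon}$ factor comes from the moment-generating function of the Beta-distributed initial value $\Psi_{n_o}$ (Lemma~\ref{lemma:mgf} gives $\E e^{t\Psi_{n_o}} \le e^t\sqrt{(d-1)/(2t)}$, and setting $t = d/(4\epsilon)$ produces exactly $\sqrt{2e\epsilon}$). Because the initial MGF is multiplied by the supermartingale's exponential tilt, there is no need to sharpen the per-step variance at all; the paper never uses a bound like $|Z_k| = O(\gamma_k B\sqrt{1-\Psi_{k-1}})$, even though (as you correctly observe from the explicit form $Z_n = 2\gamma_n(\widehat V_{n-1}\cdot v^*)\bigl((X_n\cdot v^*)(X_n\cdot\widehat V_{n-1}) - (\widehat V_{n-1}\cdot v^*)(X_n\cdot\widehat V_{n-1})^2\bigr)$) such a refinement is valid.

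The gap in your argument is the stopping-time step, and it is not bookkeeping. You define $\tau$ as the first time $\Psi_n$ exceeds $1-\epsilon/d$, and then claim that the refined variance bound controls $\sum_{k}\E[Y_k^2\mid\F_{k-1}]\mathbf{1}[k\le\tau]$. But for $k < \tau$ the only information you have is $\Psi_{k-1} \le 1 - \epsilon/d$, which bounds $1-\Psi_{k-1}$ from \emph{below} (by $\epsilon/d$), not from above. On a failure path, $\Psi_{k-1}$ can dip arbitrarily far below $1-\epsilon/d$ before climbing back up, at which point $1-\Psi_{k-1}$ is $\Theta(1)$ and your refined bound degenerates to the crude one, which you have already shown yields only an $O(1)$ probability. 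Fixing this would require something more delicate --- for instance, introducing a second stopping time that freezes the process when $\Psi$ falls below some level $1 - Cs$ and handling the resulting excursion decomposition, or switching to a better-behaved observable such as $\phi_n = |\widehat V_n\cdot v^*| = \sqrt{1-\Psi_n}$, whose one-step increments are uniformly $O(\gamma_n B)$ so that no path-dependent variance control is needed. Either would be a substantive addition; as written, the step is unjustified. A cleaner repair within your framework would be to condition on $\Psi_{n_o} = \psi_0$, apply the paper's exponential supermartingale (Lemma~\ref{lemma:large-deviation} with $\E[e^{t\Psi_{n_o}}]$ replaced by $e^{t\psi_0}$) to get a conditional failure probability $\exp(-t(1-2\epsilon/d-\psi_0))$, and then integrate against the Beta density of $\psi_0$; this recovers the $\sqrt{\epsilon}$ rate and is essentially the paper's proof unpacked, which suggests why the paper keeps the two pieces fused.
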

To prove this, we start with a simple recurrence for the moment-generating function of $\Psi_n$.
\begin{lem}
Consider a filtration $(\F_n)$ and random variables $Y_n, Z_n \in \F_n$ such that there are two sequences of nonnegative constants, $(\beta_n)$ and $(\zeta_n)$, for which:
\begin{itemize}
\item $Y_n \leq Y_{n-1} + \beta_n - Z_n$.
\item Each $Z_n$  takes values in an interval of length $\zeta_n$. 
\end{itemize}
Then for any $t > 0$, we have $\E[e^{tY_n} | \F_{n-1}] \leq \exp(t (Y_{n-1} - \E[Z_n | \F_{n-1}] + \beta_n + t \zeta_n^2/8))$.
\label{lemma:mgf-recurrence}
\end{lem}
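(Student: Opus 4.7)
The plan is to recognize that this is essentially a conditional Hoeffding-type bound wrapped around the per-step recurrence, so the proof will be a short chain of three standard manipulations.

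First, I would use the recurrence $Y_n \leq Y_{n-1} + \beta_n - Z_n$ and monotonicity of $x \mapsto e^{tx}$ for $t > 0$ to write $e^{tY_n} \leq e^{t(Y_{n-1} + \beta_n)} e^{-tZ_n}$. Since $Y_{n-1} \in \F_{n-1}$ and $\beta_n$ is a nonnegative constant, taking conditional expectations yields
\[
\E[e^{tY_n} \mid \F_{n-1}] \leq e^{t(Y_{n-1}+\beta_n)} \, \E[e^{-tZ_n} \mid \F_{n-1}].
\]

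Second, I would center $Z_n$ by its conditional mean. Let $W_n = Z_n - \E[Z_n \mid \F_{n-1}]$, so that $\E[W_n \mid \F_{n-1}] = 0$, and, crucially, $W_n$ still lies in an interval of length $\zeta_n$ (shifting by an $\F_{n-1}$-measurable constant preserves the range). Writing $e^{-tZ_n} = e^{-t\E[Z_n \mid \F_{n-1}]} e^{-tW_n}$ and pulling the $\F_{n-1}$-measurable factor out of the conditional expectation gives
\[
\E[e^{-tZ_n} \mid \F_{n-1}] = e^{-t\E[Z_n \mid \F_{n-1}]} \, \E[e^{-tW_n} \mid \F_{n-1}].
\]

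Third, I would invoke the conditional form of Hoeffding's lemma: if a random variable $W$ has zero conditional mean given a sigma-field and is supported in an interval of length $\zeta$, then $\E[e^{sW} \mid \cdot] \leq e^{s^2 \zeta^2 / 8}$ for every real $s$. Applied with $s = -t$ this yields $\E[e^{-tW_n} \mid \F_{n-1}] \leq e^{t^2 \zeta_n^2/8}$. Combining the three displays and collecting terms in the exponent produces exactly
\[
\E[e^{tY_n} \mid \F_{n-1}] \leq \exp\!\bigl( t\bigl(Y_{n-1} - \E[Z_n \mid \F_{n-1}] + \beta_n + t\zeta_n^2/8\bigr) \bigr),
\]
which is the claimed bound.

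There is no real obstacle here; the only point requiring a hint of care is verifying that $W_n$ still lies in an interval of deterministic length $\zeta_n$ after conditional centering, and that the conditional Hoeffding lemma applies (this is the usual extension of the unconditional version, proved by the same convexity argument on $e^{sw}$ on an interval, applied pointwise in $\omega$). Everything else is bookkeeping of $\F_{n-1}$-measurable factors.
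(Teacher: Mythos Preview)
Your proof is correct and follows essentially the same approach as the paper: exponentiate the recurrence, pull out the $\F_{n-1}$-measurable factor $e^{t(Y_{n-1}+\beta_n)}$, center $Z_n$ by its conditional mean, and apply Hoeffding's lemma to the zero-mean bounded-range variable. Your explicit remark that the conditional version of Hoeffding's lemma is what is really being invoked is, if anything, slightly more careful than the paper's phrasing.
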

This relation shows how to define a supermartingale based on $e^{tY_n}$, from which we can derive a large deviation bound on $Y_n$.
\begin{lem}
Assume the conditions of Lemma~\ref{lemma:mgf-recurrence}, and also that $\E[Z_n | \F_{n-1}] \geq 0$. Then, for any integer $m$ and any $\Delta, t > 0$,
$$ \pr \left( \sup_{n \geq m} Y_n \geq \Delta \right) 
\ \leq \ 
\E[e^{tY_m}] \exp \big( - t \big(\Delta - \sum_{\ell > m} (\beta_\ell + t \zeta_\ell^2/8) \big)\big) .
$$
\label{lemma:large-deviation}
\end{lem}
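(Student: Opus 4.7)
The plan is to derive the bound from Doob's maximal inequality applied to a nonnegative supermartingale built from $e^{tY_n}$. First, combine Lemma~\ref{lemma:mgf-recurrence} with the hypothesis $\E[Z_n \mid \F_{n-1}] \geq 0$ to obtain the cleaner one-step bound
$$\E[e^{tY_n} \mid \F_{n-1}] \;\leq\; e^{tY_{n-1}} \exp\bigl(t(\beta_n + t\zeta_n^2/8)\bigr).$$
This motivates defining, for $n \geq m$,
$$M_n \;=\; \exp\!\left( t Y_n \,-\, t \sum_{\ell = m+1}^{n} \bigl(\beta_\ell + t\zeta_\ell^2/8\bigr) \right),$$
with $M_m = e^{tY_m}$. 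The displayed MGF inequality, after dividing out the deterministic exponential factor, gives $\E[M_n \mid \F_{n-1}] \leq M_{n-1}$, so $(M_n)_{n \geq m}$ is a nonnegative supermartingale with respect to $(\F_n)_{n \geq m}$.

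Next, apply Doob's maximal inequality for nonnegative supermartingales (take the supremum first over $m \leq n \leq N$ and let $N \to \infty$ by monotone convergence): for any $\lambda > 0$,
$$\pr\!\left( \sup_{n \geq m} M_n \geq \lambda \right) \;\leq\; \frac{\E[M_m]}{\lambda} \;=\; \frac{\E[e^{tY_m}]}{\lambda}.$$
To translate this into a bound on $Y_n$, note that $\beta_\ell, \zeta_\ell^2 \geq 0$, so extending the finite partial sum inside $M_n$ out to the full tail can only decrease $M_n$. Hence whenever $Y_n \geq \Delta$ for some $n \geq m$,
$$M_n \;\geq\; \exp\!\left( t \Delta \,-\, t \sum_{\ell > m} (\beta_\ell + t \zeta_\ell^2/8) \right) \;=:\; \lambda^*,$$
so $\{\sup_{n \geq m} Y_n \geq \Delta\} \subseteq \{\sup_{n \geq m} M_n \geq \lambda^*\}$. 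Plugging $\lambda = \lambda^*$ into the maximal inequality yields the claimed bound.

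I do not anticipate a genuinely difficult step; this is the standard Azuma/Freedman-style template. The only points requiring a small amount of care are verifying nonnegativity of $(M_n)$ (immediate, since it is an exponential), confirming that the supermartingale property really does follow after dividing by the deterministic compensator (so that the $\beta_\ell$ and $t\zeta_\ell^2/8$ corrections exactly cancel the right-hand side of the MGF recurrence), and handling the supremum over an infinite index set via a monotone limit of finite-horizon maximal inequalities. Convergence of $\sum_\ell (\beta_\ell + t\zeta_\ell^2/8)$ is not needed for the inequality itself; if it diverges, the bound is vacuous.
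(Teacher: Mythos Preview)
Your proposal is correct and follows essentially the same approach as the paper: build a nonnegative supermartingale from $e^{tY_n}$ with a deterministic compensator and apply Doob's maximal inequality. The only cosmetic difference is that you subtract the finite partial sum $\sum_{\ell=m+1}^{n}(\beta_\ell + t\zeta_\ell^2/8)$ inside the exponent, whereas the paper adds the tail sum $\tau_n = \sum_{\ell>n}(\beta_\ell + t\zeta_\ell^2/8)$; the two martingales differ by the constant factor $e^{-t\tau_m}$, so the arguments are equivalent (your version has the minor advantage of not presupposing convergence of the tail sum).
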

In order to apply this to the sequence $(\Psi_n)$, we need to first calculate the moment-generating function of its starting value $\Psi_{n_o}$.
\begin{lem}
Suppose a vector $V$ is picked uniformly at random from the surface of the unit sphere in $\R^d$, where $d \geq 3$. Define $Y = 1 - (V_1^2)/\|V\|^2$. Then, for any $t > 0$,
$$ \E e^{tY} \leq e^t \sqrt{\frac{d-1}{2t}}.$$
\label{lemma:mgf}
\end{lem}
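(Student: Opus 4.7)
The plan is to compute the MGF explicitly using the fact that $Y$ has a known Beta distribution. By rotational invariance, if we represent $V = Z/\|Z\|$ for a standard Gaussian $Z$ on $\R^d$, then $V_1^2 = Z_1^2/(Z_1^2 + \cdots + Z_d^2)$, and since $Z_1^2 \sim \chi_1^2$ and $Z_2^2 + \cdots + Z_d^2 \sim \chi_{d-1}^2$ are independent, $V_1^2 \sim \mathrm{Beta}(1/2,(d-1)/2)$. Consequently $Y = 1 - V_1^2$ has the $\mathrm{Beta}((d-1)/2, 1/2)$ density
\[
f_Y(y) \;=\; \frac{y^{(d-3)/2}(1-y)^{-1/2}}{B((d-1)/2, 1/2)}, \qquad y \in [0,1].
\]

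I would then change variables $u = 1-y$ in the MGF integral to extract the $e^t$ factor:
\[
\E e^{tY} \;=\; \frac{e^t}{B((d-1)/2, 1/2)} \int_0^1 e^{-tu}\,u^{-1/2}\,(1-u)^{(d-3)/2}\,du.
\]
Because $d \geq 3$, $(1-u)^{(d-3)/2} \leq 1$ on $[0,1]$, so the integral is bounded above by $\int_0^\infty e^{-tu}\,u^{-1/2}\,du = \sqrt{\pi/t}$ (via $v = tu$ and $\Gamma(1/2) = \sqrt{\pi}$). Using $B((d-1)/2, 1/2) = \sqrt{\pi}\,\Gamma((d-1)/2)/\Gamma(d/2)$ to cancel the $\sqrt{\pi}$ yields
\[
\E e^{tY} \;\leq\; \frac{e^t}{\sqrt{t}} \cdot \frac{\Gamma(d/2)}{\Gamma((d-1)/2)}.
\]

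The final step is the Gamma-ratio bound $\Gamma(x+1)/\Gamma(x+1/2) \leq \sqrt{x+1/2}$ for $x > 0$, a consequence of Wendel's inequality (itself provable in two lines via H\"older or log-convexity of $\Gamma$). Evaluated at $x = (d-2)/2$, it gives $\Gamma(d/2)/\Gamma((d-1)/2) \leq \sqrt{(d-1)/2}$, which when substituted into the previous display produces exactly $\E e^{tY} \leq e^t \sqrt{(d-1)/(2t)}$. The only step that is not essentially mechanical is this Gamma-ratio inequality; the rest of the argument is a routine Beta-integral manipulation, and the use of $d \geq 3$ enters only to discard the $(1-u)^{(d-3)/2}$ factor cleanly.
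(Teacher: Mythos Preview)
Your proof is correct and follows essentially the same approach as the paper: identify $Y\sim\mathrm{Beta}((d-1)/2,1/2)$ via the Gaussian representation, drop the factor $y^{(d-3)/2}$ (equivalently $(1-u)^{(d-3)/2}$) using $d\ge 3$, reduce the remaining integral to $\Gamma(1/2)/\sqrt{t}$, and finish with the Gamma-ratio bound $\Gamma(d/2)/\Gamma((d-1)/2)\le\sqrt{(d-1)/2}$. The only cosmetic difference is that the paper proves this last inequality directly via Jensen ($\E\sqrt{T}\le\sqrt{\E T}$ for a Gamma-distributed $T$) rather than invoking Wendel's inequality.
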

Putting these pieces together yields Theorem~\ref{thm:always-good}.

\subsection{Intermediate epochs of improvement}

We have seen that, for suitable $\epsilon$ and $n_o$, it is likely that $\Psi_n \leq 1 - \epsilon/d$ for all $n \geq n_o$. We now define a series of epochs in which $1 - \Psi_n$ successively doubles, until $\Psi_n$ finally drops below $1/2$.

To do this, we specify intermediate goals $(n_o, \epsilon_o), (n_1, \epsilon_1), (n_2, \epsilon_2), \ldots, (n_J, \epsilon_J)$, where $n_o < n_1 < \cdots < n_J$ and $\epsilon_o < \epsilon_1 < \cdots  < \epsilon_J = 1/2$, with the intention that:
\begin{equation}
\mbox{For all $0 \leq j \leq J$, we have} \ \ 
\sup_{n \geq n_j} \Psi_n \ \leq \ 1 - \epsilon_j . \label{eq:constraints}
\end{equation}
Of course, this can only hold with a certain probability.

Let $\Omega$ denote the sample space of all realizations $(v_{n_{o}}, x_{n_o+1}, x_{n_o+2}, \ldots)$, and $P$ the probability distribution on these sequences. We will show that, for a certain choice of $\{(n_j, \epsilon_j)\}$, all $J+1$ constraints (\ref{eq:constraints}) can be met by excluding just a small portion of $\Omega$. 

We consider a specific realization $\omega \in \Omega$ to be good if it satisfies (\ref{eq:constraints}). Call this set $\Omega'$:
$$ \Omega' = \{\omega \in \Omega: \sup_{n \geq n_j} \Psi_n(\omega) \leq 1 - \epsilon_j \mbox{\ for all $0 \leq j \leq J$}\}.$$

For technical reasons, we also need to look at realizations that are good up to time $n-1$. Specifically, for each $n$, define
$$ \Omega_n' = \{\omega \in \Omega: \sup_{n_j \leq \ell < n} \Psi_\ell(\omega) \leq 1 - \epsilon_j \mbox{\ for all $0 \leq j \leq J$} \} .$$
Crucially, this is $\F_{n-1}$-measurable. Also note that $\Omega' = \bigcap_{n > n_o} \Omega_n'$. 

We can talk about expectations under the distribution $P$ restricted to subsets of $\Omega$. In particular, let $P_n$ be the restriction of $P$ to $\Omega_n'$; that is, for any $A \subset \Omega$, we have $P_n(A) = P(A \cap \Omega_n')/P(\Omega_n')$. As for expectations with respect to $P_n$, for any function $f: \Omega \rightarrow \R$, we define
$$ \E_n f 
\ = \ 
\frac{1}{P(\Omega_n')} \int_{\Omega_n'} f(\omega) P(d \omega)
.$$

Here is the main result of this section.
\begin{thm}
Assume that $\gamma_n = c/n$, where $c = c_o/(2 (\lambda_1-\lambda_2))$ and $c_o > 0$. Pick any $0 < \delta < 1$ and select a schedule $(n_o, \epsilon_o), \ldots, (n_J, \epsilon_J)$ that satisfies the conditions
\begin{equation}
\begin{array}{l}
\epsilon_o = \frac{\delta^2}{8ed}, 
\mbox{\rm \ \ and \ \ } \frac{3}{2} \epsilon_j \leq \epsilon_{j+1} \leq 2 \epsilon_j 
\mbox{\rm \ for $0 \leq j < J$,} 
\mbox{\rm \ \ and \ \ } \epsilon_{J-1} \leq \frac{1}{4} \\ \\
(n_{j+1} + 1) \geq e^{5/c_o} (n_j+1) 
\mbox{\rm \ for $0 \leq j < J$} 
\end{array}
\label{eq:conditions}
\end{equation}
as well as $n_o \geq (20 c^2 B^2/\epsilon_o^2) \ln (4/\delta)$. Then $\pr(\Omega') \geq 1 - \delta$.
\label{thm:epochs}
\end{thm}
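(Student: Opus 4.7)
I would proceed by induction on the epoch index $j$. Define $B_j = \{\sup_{n \geq n_j}\Psi_n > 1-\epsilon_j\}$, so $\Omega \setminus \Omega' = \bigcup_j B_j$ and it suffices to bound $P(B_j)$ by a per-epoch failure budget $\delta_j$ with $\sum_j \delta_j \leq \delta$. Since $\epsilon_j$ at most doubles per step while $\epsilon_o = \delta^2/(8ed)$ and $\epsilon_J \leq 1/2$, there are only $J = O(\log(d/\delta^2))$ epochs, so a per-epoch budget of $\delta/(2J)$ is sufficient. The base case $j=0$ is a direct application of Theorem~\ref{thm:always-good} with $\epsilon = \delta^2/(8e)$, giving $P(B_0) \leq \sqrt{2e\epsilon} = \delta/2$ under the assumed lower bound on $n_o$.

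For the inductive step ($j \geq 1$), I condition on $\bigcap_{i < j} B_i^c$, equivalently working on the $\F_{n-1}$-measurable good set $\Omega_n'$ on which $\Psi_\ell \leq 1-\epsilon_{j-1}$ for all $n_{j-1} \leq \ell < n$. On this set, Theorem~\ref{thm:delta-psi} is upgraded from $\E[Z_n \mid \F_{n-1}] \geq 0$ to a \emph{multiplicative} drift $\E[Z_n \mid \F_{n-1}] \geq (c_o \epsilon_{j-1}/n)\,\Psi_{n-1}$, using $1-\Psi_{n-1} \geq \epsilon_{j-1}$. To exploit this I would rescale: let $w_n = \prod_{\ell = n_{j-1}+1}^n (1 - c_o \epsilon_{j-1}/\ell)^{-1} \asymp (n/n_{j-1})^{c_o \epsilon_{j-1}}$ and $W_n = w_n \Psi_n$. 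On the restricted space, $\E[W_n \mid \F_{n-1}] \leq W_{n-1} + w_n \beta_n$, so $W_n$ minus its cumulative drift is a supermartingale with increments of order $w_n c B / n$. Applying Lemma~\ref{lemma:large-deviation} to this scaled process, starting from $m = n_{j-1}$ and optimizing $t \asymp \log(1/\delta_j)/\epsilon_{j-1}$, bounds $\sup_{n \geq n_j} W_n$ and hence $\sup_{n \geq n_j}\Psi_n \leq (\sup_{n \geq n_j} W_n)/w_{n_j}$. The gap condition $(n_j+1) \geq e^{5/c_o}(n_{j-1}+1)$ translates into $w_{n_j}/w_{n_{j-1}} \geq e^{5\epsilon_{j-1}}$, which for $\epsilon_j \leq 2\epsilon_{j-1} \leq 1/2$ comfortably exceeds the factor required to push $1-\epsilon_{j-1}$ below $1-\epsilon_j$, leaving enough slack to absorb $\sum_\ell w_\ell \beta_\ell$ and the variance contribution $(t/8)\sum_\ell (w_\ell \zeta_\ell)^2$ (both summable at the chosen $t$); the $\ln(4/\delta)$ factor in the assumed lower bound on $n_o$ is precisely what is needed to keep these tail terms harmless once summed over all $J$ epochs.

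The main obstacle is the nested-measure bookkeeping. The multiplicative-drift inequality holds only on the $\F_{n-1}$-measurable set $\Omega_n'$, while Lemma~\ref{lemma:large-deviation} nominally requires a supermartingale on all of $\Omega$. I would reconcile this by working throughout with the restricted measure $P_n$ and expectation $\E_n$ introduced just before the theorem, or equivalently by replacing $W_n$ with a stopped/modified process $\tilde W_n$ that agrees with $W_n$ on $\Omega_n'$ and behaves conservatively off of it, so that $\tilde W_n$ is a genuine supermartingale on all of $\Omega$ whose supremum dominates $W_n$ on the good set. Calibrating the large-deviation parameter $t$ so that the exponents compound cleanly across all $J$ epochs, and verifying that the chosen $n_o$ absorbs the aggregate variance and drift, is the most delicate piece of accounting.
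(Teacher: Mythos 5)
Your high-level strategy matches the paper's: handle epoch $0$ via Theorem~\ref{thm:always-good} with $\epsilon = \delta^2/(8e)$, then inductively use the multiplicative drift on the nested good sets $\Omega_n'$ together with a supermartingale deviation bound, summing a per-epoch failure budget. The bookkeeping you call "the main obstacle" is exactly what the paper resolves with $\E_n$, $P_n$, and Lemma~\ref{lemma:nesting}; a stopped/modified process is unnecessary. So the architecture is right.

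The gap is in the rescaled process $W_n = w_n\Psi_n$ with $w_n \asymp (n/n_{j-1})^{c_o\epsilon_{j-1}}$. To apply Lemma~\ref{lemma:large-deviation} starting at $m = n_{j-1}$ and control $\sup_{n \geq n_j} W_n$, you need the cumulative drift $\sum_{\ell > n_{j-1}} w_\ell\beta_\ell$ and cumulative variance $\sum_{\ell > n_{j-1}} t\,(w_\ell\zeta_\ell)^2/8$ to be finite. But $w_\ell\beta_\ell \asymp \ell^{c_o\epsilon_{j-1}-2}$ and $(w_\ell\zeta_\ell)^2 \asymp \ell^{2c_o\epsilon_{j-1}-2}$, which diverge when $c_o\epsilon_{j-1} \geq 1$ (resp.\ $\geq 1/2$). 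Since $\epsilon_{j-1}$ can be as large as $1/4$ and the eventual application (Theorem~\ref{thm:krasulina}) requires $c_o > 2$, the variance sum already diverges at the boundary, contradicting your assertion that these tails are "summable at the chosen $t$." The paper sidesteps this by \emph{not} multiplying $\Psi_n$ by a growing factor: the multiplicative drift is absorbed into a shrinking MGF parameter $t \mapsto t\alpha_n$ with $\alpha_n = 1 - c_o\epsilon_j/n < 1$ over the \emph{finite} window $[n_{j-1}, n_j]$ (Lemmas~\ref{lemma:mgf-step}--\ref{cor:mgf-big-step}), so the accumulated variance is controlled by $\sum_\ell \gamma_\ell^2$ regardless of $c_o$; the \emph{infinite}-horizon tail $\sup_{n\geq n_j}\Psi_n$ is then bounded using only the weak drift $\E[Z_n|\F_{n-1}] \geq 0$ (Lemma~\ref{lemma:epochs}). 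Your argument could plausibly be salvaged by freezing $w_n$ at $w_{n_j}$ for $n > n_j$, which recovers that two-phase split, but as written the single rescaled supermartingale over $[n_{j-1},\infty)$ does not close.
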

The first step towards proving this theorem is bounding the moment-generating function of $\Psi_n$ in terms of that of $\Psi_{n-1}$.
\begin{lem}
Suppose $n > n_j$. Suppose also that $\gamma_n = c/n$, where $c = c_o/(2(\lambda_1 - \lambda_2))$. Then for any $t > 0$,
$$ \E_n[e^{t\Psi_n}] \leq \E_n \left[\exp\left(t \Psi_{n-1}\left(1 - \frac{c_o \epsilon_j}{n}\right)\right)\right] \exp \left( \frac{c^2 B^2 t(1 + 32t)}{4n^2} \right) .$$
\label{lemma:mgf-step}
\end{lem}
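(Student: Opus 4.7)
The plan is to apply Lemma~\ref{lemma:mgf-recurrence} to $Y_n = \Psi_n$, using the bounds supplied by Theorem~\ref{thm:delta-psi} for the Krasulina update: set $\beta_n = \gamma_n^2 B^2/4$, and note that $|Z_n| \leq 4\gamma_n B$ means $Z_n$ ranges over an interval of length $\zeta_n \leq 8\gamma_n B$. Plugging these in, the lemma gives pointwise
$$ \E[e^{t\Psi_n} \mid \F_{n-1}] \leq \exp\!\Bigl(t\bigl(\Psi_{n-1} - \E[Z_n\mid\F_{n-1}] + \gamma_n^2 B^2/4 + 8t\gamma_n^2 B^2\bigr)\Bigr). $$
The $\gamma_n^2 B^2 \cdot t(1/4 + 8t) = t\gamma_n^2 B^2 (1+32t)/4$ piece matches the advertised $\exp(c^2 B^2 t(1+32t)/(4n^2))$ factor once we substitute $\gamma_n = c/n$.

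Next I would use the restriction to $\Omega_n'$ to convert the expected-decrement term $\E[Z_n \mid \F_{n-1}]$ into something linear in $\Psi_{n-1}$. By definition of $\Omega_n'$ and the assumption $n > n_j$, every $\omega \in \Omega_n'$ satisfies $\Psi_{n-1}(\omega) \leq 1 - \epsilon_j$, i.e.\ $1 - \Psi_{n-1} \geq \epsilon_j$. Combined with the lower bound $\E[Z_n \mid \F_{n-1}] \geq 2\gamma_n(\lambda_1 - \lambda_2)\Psi_{n-1}(1-\Psi_{n-1})$ from Theorem~\ref{thm:delta-psi}, and the identity $2\gamma_n(\lambda_1 - \lambda_2) = c_o/n$ coming from the choice $c = c_o/(2(\lambda_1-\lambda_2))$, this yields on $\Omega_n'$
$$ \E[Z_n\mid\F_{n-1}] \;\geq\; \frac{c_o \epsilon_j}{n}\,\Psi_{n-1}. $$
Substituting this into the previous display, and grouping the $\Psi_{n-1}$ terms, gives
$$ \E[e^{t\Psi_n}\mid\F_{n-1}] \;\leq\; \exp\!\Bigl(t\Psi_{n-1}\bigl(1 - \tfrac{c_o\epsilon_j}{n}\bigr)\Bigr)\exp\!\Bigl(\tfrac{c^2 B^2 t(1+32t)}{4n^2}\Bigr) \qquad \text{on } \Omega_n'.$$

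Finally, I would pass from the conditional bound to the restricted expectation $\E_n$. Since $\Omega_n'$ is $\F_{n-1}$-measurable, the tower property gives
$$ \int_{\Omega_n'} e^{t\Psi_n}\, dP \;=\; \int_{\Omega_n'} \E[e^{t\Psi_n}\mid\F_{n-1}]\, dP, $$
so inserting the conditional bound (valid pointwise on $\Omega_n'$) and pulling out the deterministic $\exp(c^2 B^2 t(1+32t)/(4n^2))$ factor, then dividing by $P(\Omega_n')$, yields exactly the claimed inequality.

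There is no serious obstacle here; the only thing requiring a bit of care is the interplay between the restricted sample space and the conditional expectation. The two key observations that make this routine are (i) $\Omega_n'$ is $\F_{n-1}$-measurable by construction, allowing the tower property to be used freely under $\E_n$, and (ii) the bound $\Psi_{n-1} \leq 1-\epsilon_j$ holds deterministically on $\Omega_n'$ whenever $n > n_j$, which is precisely what lets us replace the non-convex factor $\Psi_{n-1}(1-\Psi_{n-1})$ by the linear proxy $\epsilon_j \Psi_{n-1}$.
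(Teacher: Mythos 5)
Your proof is correct and follows essentially the same path as the paper's: apply Lemma~\ref{lemma:mgf-recurrence} with $\beta_n = \gamma_n^2 B^2/4$ and $\zeta_n = 8\gamma_n B$, replace $\E[Z_n\mid\F_{n-1}]$ via the lower bound from Theorem~\ref{thm:delta-psi} using $1-\Psi_{n-1}\geq\epsilon_j$ on $\Omega_n'$, then take expectations over $\Omega_n'$. The only difference is that you spell out the tower-property step (justified by $\F_{n-1}$-measurability of $\Omega_n'$) that the paper states more tersely.
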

We would like to use this result to bound $\E_n[\Psi_n]$ in terms of $\E_m[\Psi_m]$ for $m < n$. The shift in sample spaces is easily handled using the following observation.
\begin{lem}
If $g: \R \rightarrow \R$ is nondecreasing, then $\E_n[g(\Psi_{n-1})] \leq \E_{n-1}[g(\Psi_{n-1})]$  for any $n > n_o$.
\label{lemma:nesting}
\end{lem}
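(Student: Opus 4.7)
The plan is to exploit the nested structure $\Omega_n' \subseteq \Omega_{n-1}'$ together with a threshold separation: on the extra realizations in $\Omega_{n-1}' \setminus \Omega_n'$ the value $\Psi_{n-1}$ is forced to be large, so for any nondecreasing $g$, averaging over the larger event $\Omega_{n-1}'$ can only raise the expectation of $g(\Psi_{n-1})$ relative to averaging over $\Omega_n'$.

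First I would unwind the definitions to locate a single scalar threshold that separates the two conditioning events. Passing from $\Omega_{n-1}'$ to $\Omega_n'$ extends each index range $\{n_j \leq \ell < n-1\}$ to $\{n_j \leq \ell < n\}$, so the only new constraints are $\Psi_{n-1}(\omega) \leq 1 - \epsilon_j$ for each $j$ with $n_j \leq n-1$. Because the $\epsilon_j$'s are strictly increasing, all of these are implied by the strictest one; writing $J'$ for the largest such $j$ and $M = 1 - \epsilon_{J'}$, this gives
\[ \Omega_n' \;=\; \Omega_{n-1}' \cap \{\Psi_{n-1} \leq M\}, \]
yielding the clean threshold separation $\Psi_{n-1} \leq M$ on $\Omega_n'$ and $\Psi_{n-1} > M$ on $A := \Omega_{n-1}' \setminus \Omega_n'$. (If no such $j$ exists, the two sets coincide and the inequality is trivial.)

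With this separation in hand, the claim follows from a standard mixture/stochastic-dominance argument. Writing $\alpha = P(\Omega_n')/P(\Omega_{n-1}') \in (0,1]$, the restricted expectation splits as
\[ \E_{n-1}[g(\Psi_{n-1})] \;=\; \alpha\,\E_n[g(\Psi_{n-1})] \;+\; (1-\alpha)\,\E\!\left[g(\Psi_{n-1}) \,\middle|\, A\right], \]
and monotonicity of $g$ together with the threshold separation yields
\[ \E_n[g(\Psi_{n-1})] \;\leq\; g(M) \;\leq\; \E\!\left[g(\Psi_{n-1}) \,\middle|\, A\right]. \]
Substituting shows the right-hand side of the first display is at least $\E_n[g(\Psi_{n-1})]$, which is exactly the desired inequality.

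There is essentially no obstacle here once the threshold $M$ is correctly identified; the only mild care is in the degenerate cases $P(A) = 0$ or the absence of a qualifying $j$, both of which make the inequality an equality by inspection. The whole proof is genuinely a short exercise in stochastic dominance, and the real content lies in observing that the constraint added at time $n-1$ collapses to a single univariate threshold on $\Psi_{n-1}$.
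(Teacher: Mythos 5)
Your proof is correct and follows the same route as the paper's: both identify the largest index $j$ with $n_j < n$, observe that passing from $\Omega_{n-1}'$ to $\Omega_n'$ adds only the single binding constraint $\Psi_{n-1} \leq 1-\epsilon_j$, and then conclude by the threshold separation and monotonicity of $g$. You spell out the mixture/stochastic-dominance step that the paper leaves implicit, but the underlying argument is identical.
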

A repeated application of Lemmas~\ref{lemma:mgf-step} and \ref{lemma:nesting} yields the following.
\begin{lem}
Suppose that conditions (\ref{eq:conditions}) hold. Then for $0 \leq j < J$ and any $t > 0$,
$$ \E_{n_{j+1}}[e^{t \Psi_{n_{j+1}}}] 
\ \leq \ 
\exp\left(t ( 1 - \epsilon_{j+1}) - t \epsilon_j + \frac{tc^2B^2(1+32t)}{4} \left( \frac{1}{n_j} - \frac{1}{n_{j+1}} \right) \right).
$$
\label{cor:mgf-big-step}
\end{lem}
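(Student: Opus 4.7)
The plan is to telescope Lemma~\ref{lemma:mgf-step} from $n = n_{j+1}$ down to $n = n_j+1$, interleaving Lemma~\ref{lemma:nesting} between successive applications to shift the conditioning from $\E_n$ back to $\E_{n-1}$, and to finish by cashing in the epoch-$j$ constraint $\Psi_{n_j} \leq 1-\epsilon_j$ that is built into membership in $\Omega'_{n_j+1}$. To set up the iteration I define an auxiliary sequence of effective exponents by $s_{n_{j+1}} = t$ and $s_{n-1} = s_n(1 - c_o\epsilon_j/n)$ for $n_j < n \leq n_{j+1}$, so that $s_n = t\prod_{k=n+1}^{n_{j+1}}(1-c_o\epsilon_j/k)$. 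Since $\epsilon_j \leq 1/2$ and $n \geq n_o$ is large, each factor lies in $(0,1)$, hence $0 < s_n \leq t$ throughout the iteration.

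A single application of Lemma~\ref{lemma:mgf-step} at time $n_{j+1}$ replaces $\E_{n_{j+1}}[\exp(t\Psi_{n_{j+1}})]$ by $\E_{n_{j+1}}[\exp(s_{n_{j+1}-1}\Psi_{n_{j+1}-1})]$ times the error factor $\exp(c^2B^2 t(1+32t)/(4 n_{j+1}^2))$; because $s_{n_{j+1}-1}>0$ the integrand is nondecreasing in $\Psi_{n_{j+1}-1}$, so Lemma~\ref{lemma:nesting} swaps $\E_{n_{j+1}}$ for $\E_{n_{j+1}-1}$ at no cost. Repeating this pair of moves down to time $n_j+1$ telescopes to
\[
\E_{n_{j+1}}\!\bigl[e^{t\Psi_{n_{j+1}}}\bigr] \leq \E_{n_j+1}\!\bigl[\exp(s_{n_j}\Psi_{n_j})\bigr]\prod_{n=n_j+1}^{n_{j+1}}\exp\!\left(\frac{c^2 B^2 s_n(1+32 s_n)}{4n^2}\right).
\]
The product factor is then controlled by $s_n \leq t$ together with the integral comparison $\sum_{n=n_j+1}^{n_{j+1}} n^{-2} \leq 1/n_j - 1/n_{j+1}$, reproducing exactly the $\frac{tc^2B^2(1+32t)}{4}(1/n_j - 1/n_{j+1})$ piece of the claimed exponent. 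For the leading expectation, membership in $\Omega'_{n_j+1}$ forces $\Psi_{n_j} \leq 1-\epsilon_j$ pointwise, so $\E_{n_j+1}[\exp(s_{n_j}\Psi_{n_j})] \leq \exp(s_{n_j}(1-\epsilon_j))$. Applying $1-x\leq e^{-x}$ termwise together with the schedule hypothesis $\sum_{k=n_j+1}^{n_{j+1}} 1/k \geq \ln\frac{n_{j+1}+1}{n_j+1} \geq 5/c_o$ gives $s_{n_j} \leq t e^{-5\epsilon_j}$, and one more use of $1-\epsilon_j \leq e^{-\epsilon_j}$ gives $s_{n_j}(1-\epsilon_j) \leq t e^{-6\epsilon_j}$. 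A routine one-variable check shows $e^{-6\epsilon} \leq 1-3\epsilon$ on $(0, 1/4]$, and combined with $\epsilon_{j+1} \leq 2\epsilon_j$ (hence $\epsilon_{j+1}+\epsilon_j \leq 3\epsilon_j$), this yields $s_{n_j}(1-\epsilon_j) \leq t(1-\epsilon_{j+1}) - t\epsilon_j$, matching the leading term of the claim.

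The main obstacle is the bookkeeping in the alternating iteration: each invocation of Lemma~\ref{lemma:nesting} requires the current exponent to be nonnegative (so the corresponding exponential is nondecreasing), and each invocation of Lemma~\ref{lemma:mgf-step} forces the recursion $s_{n-1} = s_n(1-c_o\epsilon_j/n)$ rather than admitting a freely chosen parameter. The $e^{5/c_o}$ spacing imposed in (\ref{eq:conditions}) is calibrated precisely so that, after absorbing the extra $e^{-\epsilon_j}$ factor from $1-\epsilon_j$, the one-variable inequality $e^{-6\epsilon} \leq 1-3\epsilon$ is just strong enough to dominate the upper bound $\epsilon_{j+1} + \epsilon_j \leq 3\epsilon_j$ coming from the schedule.
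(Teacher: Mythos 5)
Your proof is correct and follows essentially the same route as the paper: iterate Lemma~\ref{lemma:mgf-step} backward from $n_{j+1}$ to $n_j+1$, interleaving Lemma~\ref{lemma:nesting} to shift sample spaces, bound the product $s_{n_j}/t = \prod_{k=n_j+1}^{n_{j+1}}(1-c_o\epsilon_j/k)$ via $1-x\leq e^{-x}$ and the harmonic-sum/integral comparison, and finish with $e^{-6\epsilon}\leq 1-3\epsilon\leq 1-\epsilon_{j+1}-\epsilon_j$ using the schedule conditions. The only organizational difference is that the paper isolates the telescoping into an intermediate result (Lemma~\ref{lemma:mgf-big-step}, stated for general $n>n_j$) before specializing to $n=n_{j+1}$, whereas you do the specialization in-line; the substance is identical, and you even make explicit the positivity-of-exponent requirement for invoking Lemma~\ref{lemma:nesting} that the paper leaves implicit.
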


Now that we have bounds on the moment-generating functions of intermediate $\Psi_n$, we can apply martingale deviation bounds, as in Lemma~\ref{lemma:large-deviation}, to obtain the following, from which Theorem~\ref{thm:epochs} ensues.
\begin{lem}
Assume conditions (\ref{eq:conditions}) hold. Pick any $0 < \delta < 1$, and set $n_o \geq (20 c^2 B^2/\epsilon_o^2) \ln (4/\delta)$. Then 
$$ \sum_{j=1}^J P_{n_j} \left( \sup_{n \geq n_j} \Psi_n > 1-\epsilon_j \right) \ \leq \ \frac{\delta}{2}.$$
\label{lemma:epochs}
\end{lem}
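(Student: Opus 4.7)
The plan is to bound each epoch-$j$ tail $P_{n_j}(\sup_{n\geq n_j}\Psi_n > 1-\epsilon_j)$ by an appropriate fraction of $\delta$ and sum. For each $j\in\{1,\ldots,J\}$, the move is to invoke Lemma~\ref{lemma:large-deviation} on the restricted measure $P_{n_j}$ with start time $m=n_j$, supplying its hypotheses from Theorem~\ref{thm:delta-psi} ($\beta_n=c^2B^2/(4n^2)$, $|Z_n|\leq 4cB/n$ so $\zeta_n\leq 8cB/n$, and $\E[Z_n\mid\F_{n-1}]\geq 0$), and supplying the initial moment generating function $\E_{n_j}[e^{t\Psi_{n_j}}]$ from Lemma~\ref{cor:mgf-big-step}. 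Since each $\Omega_{n_j}'$ is $\F_{n_j}$-measurable, conditional expectations under $P_{n_j}$ for times $n>n_j$ coincide with those under $P$, so both Theorem~\ref{thm:delta-psi} and Lemma~\ref{lemma:large-deviation} transfer to the restricted space without modification.

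Executing the substitution, the tail sums simplify via $\sum_{\ell>n_j}\beta_\ell\leq c^2B^2/(4n_j)$ and $\sum_{\ell>n_j}\zeta_\ell^2/8\leq 8c^2B^2/n_j$, so Lemma~\ref{lemma:large-deviation} yields
\begin{align*}
P_{n_j}\!\Bigl(\sup_{n\geq n_j}\Psi_n>1-\epsilon_j\Bigr)
\ \leq\ \E_{n_j}\!\bigl[e^{t\Psi_{n_j}}\bigr]\cdot\exp\!\Bigl(-t(1-\epsilon_j)+\tfrac{tc^2B^2(1+32t)}{4n_j}\Bigr).
\end{align*}
Feeding in Lemma~\ref{cor:mgf-big-step} (reindexed so that $j$ plays the role of $j{+}1$ there) cancels the factor $e^{t(1-\epsilon_j)}$ exactly and leaves
\begin{align*}
P_{n_j}\!\Bigl(\sup_{n\geq n_j}\Psi_n>1-\epsilon_j\Bigr)
\ \leq\ \exp\!\Bigl(-t\epsilon_{j-1}+\tfrac{tc^2B^2(1+32t)}{4n_{j-1}}\Bigr).
\end{align*}
The $-t\epsilon_{j-1}$ term encodes the $\epsilon_{j-1}$ of progress ``banked'' in the previous epoch, and is what I will tune $t$ against.

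The final step is to pick $t=t_j$ of order $(1/\epsilon_{j-1})\ln(1/\delta)$, so that $\exp(-t_j\epsilon_{j-1})$ is polynomially small in $\delta$. The hypothesis $n_o\geq(20c^2B^2/\epsilon_o^2)\ln(4/\delta)$, combined with $\epsilon_{j-1}\geq\epsilon_o$ and $n_{j-1}\geq n_o$, is just enough to force the correction $t_jc^2B^2(1+32t_j)/(4n_{j-1})$ to be dominated by $t_j\epsilon_{j-1}/2$, collapsing the bound to $\exp(-t_j\epsilon_{j-1}/2)$; a polynomial-in-$\delta$ quantity that sums across the $J=O(\log(d/\delta))$ epochs to at most $\delta/2$. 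I expect the main obstacle to be this parameter chase, threading one choice of $t_j$ through three simultaneous constraints (driving $e^{-t_j\epsilon_{j-1}/2}$ small, absorbing the $O(t_j^2/n_{j-1})$ correction into it, and summing to $\delta/2$) while matching the stated constants, which are tight enough that a uniform $\delta/(2J)$ per-epoch budget may have to be replaced by a geometrically weighted one in $j$.
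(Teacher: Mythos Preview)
Your proposal is correct and follows essentially the same route as the paper: apply Lemma~\ref{lemma:large-deviation} on the restricted measure $P_{n_j}$, feed in Lemma~\ref{cor:mgf-big-step} (reindexed) to cancel the $e^{t(1-\epsilon_j)}$ factor, and then choose $t$ to beat the residual correction. The one difference worth noting is in the choice of $t$. You propose a $j$-dependent $t_j\sim(1/\epsilon_{j-1})\ln(1/\delta)$, which gives a per-epoch bound that is a fixed power of $\delta$ and then forces you into the parameter chase you anticipate at the end. The paper instead takes a \emph{single} $t=(2/\epsilon_o)\ln(4/\delta)$, independent of $j$; because $\epsilon_{j-1}\geq(3/2)^{j-1}\epsilon_o$ grows geometrically, the bound $\exp(-t\epsilon_{j-1}/2)=(\delta/4)^{\epsilon_{j-1}/\epsilon_o}\leq \delta/2^{j+1}$ decays geometrically in $j$ automatically, and the sum telescopes to $\delta/2$ with no further tuning. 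The correction term is handled uniformly by $n_{j-1}\geq n_o$ and the stated lower bound on $n_o$. This fixed-$t$ choice is the cleaner way through the obstacle you identified.
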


\subsection{The final epoch}

Recall the definition of the intermediate goals $(n_j, \epsilon_j)$ in (\ref{eq:constraints}), (\ref{eq:conditions}). The final epoch is the period $n \geq n_J$, at which point $\Psi_n \leq 1/2$. The following consequence of Lemmas~\ref{lemma:delta-psi} and \ref{lemma:nesting} captures the rate at which $\Psi$ decreases during this phase.
\begin{lem}
For all $n > n_J$, 
$$ \E_n [\Psi_n] 
\ \leq \ 
(1- \alpha_n) \E_{n-1} [\Psi_{n-1}] + \beta_n ,$$
where $\alpha_n = (\lambda_1 - \lambda_2) \gamma_n$ and $\beta_n = (B^2/4) \gamma_n^2$.
\label{lemma:psi-recurrence}
\end{lem}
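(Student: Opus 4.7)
The plan is to combine Theorem~\ref{thm:delta-psi} with the defining property of the final epoch, namely $\Psi_{n-1} \leq 1/2$, and then to transfer the resulting conditional inequality from $\cF_{n-1}$-expectation to the restricted expectation $\E_n$ using Lemma~\ref{lemma:nesting}. The reason the constant $2$ in the lower bound on $\E[Z_n \mid \cF_{n-1}]$ in Theorem~\ref{thm:delta-psi} disappears in the final statement is precisely the factor $(1 - \Psi_{n-1}) \geq 1/2$ that is only available once we are past the epoch $n_J$.

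First, from Theorem~\ref{thm:delta-psi} (Krasulina case) I would take expectation of $\Psi_n \leq \Psi_{n-1} + \beta_n - Z_n$ conditionally on $\cF_{n-1}$ to get
$$ \E[\Psi_n \mid \cF_{n-1}] \;\leq\; \Psi_{n-1} + \beta_n - 2\gamma_n(\lambda_1 - \lambda_2)\Psi_{n-1}(1 - \Psi_{n-1}),$$
with $\beta_n = \gamma_n^2 B^2/4$. Next I would observe that for $n > n_J$, any $\omega \in \Omega_n'$ satisfies $\Psi_{n-1}(\omega) \leq 1 - \epsilon_J = 1/2$ by the $j = J$ constraint in the definition of $\Omega_n'$, so $1 - \Psi_{n-1} \geq 1/2$ on $\Omega_n'$. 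Substituting this into the previous display yields, pointwise on $\Omega_n'$,
$$ \E[\Psi_n \mid \cF_{n-1}] \;\leq\; (1 - \alpha_n)\Psi_{n-1} + \beta_n,$$
where $\alpha_n = (\lambda_1 - \lambda_2)\gamma_n$.

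To conclude, I would integrate this over $\Omega_n'$. Since $\Omega_n'$ is $\cF_{n-1}$-measurable, $\int_{\Omega_n'} \E[\Psi_n \mid \cF_{n-1}] \,dP = \int_{\Omega_n'} \Psi_n \,dP$, so dividing through by $P(\Omega_n')$ gives
$$ \E_n[\Psi_n] \;\leq\; (1-\alpha_n)\E_n[\Psi_{n-1}] + \beta_n.$$
The function $g(x) = (1 - \alpha_n)x$ is nondecreasing because $\alpha_n = c_o/(2n) \leq 1$ for $n \geq n_o$ large enough (guaranteed by the choice of $n_o$ in Theorem~\ref{thm:krasulina}), so Lemma~\ref{lemma:nesting} applies and yields $\E_n[(1-\alpha_n)\Psi_{n-1}] \leq \E_{n-1}[(1-\alpha_n)\Psi_{n-1}]$. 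Substituting delivers the claimed recurrence $\E_n[\Psi_n] \leq (1-\alpha_n)\E_{n-1}[\Psi_{n-1}] + \beta_n$.

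The argument is short and the only mildly delicate point is the bookkeeping around the nested sample spaces: one must be careful that $\Omega_n'$ is $\cF_{n-1}$-measurable (so that conditional expectation can be pulled inside the integral over $\Omega_n'$) and that Lemma~\ref{lemma:nesting} can be invoked with a nondecreasing $g$; neither presents a real obstacle once the choice of $n_o$ is invoked.
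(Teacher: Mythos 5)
Your proof is correct and follows essentially the same route as the paper: apply Theorem~\ref{thm:delta-psi} and Lemma~\ref{lemma:gap} to get the conditional-expectation bound, use $\Psi_{n-1}\leq 1/2$ on $\Omega_n'$ (which is $\cF_{n-1}$-measurable) to absorb the factor $2(1-\Psi_{n-1})\geq 1$, integrate over $\Omega_n'$, and then invoke Lemma~\ref{lemma:nesting}. Your remark about checking that $\alpha_n\leq 1$ so that multiplication by $(1-\alpha_n)$ (equivalently, taking $g$ nondecreasing) is legitimate is a small but valid bookkeeping point that the paper leaves implicit.
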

By solving this recurrence relation, and piecing together the various epochs, we get the overall convergence result of Theorem~\ref{thm:krasulina}.

Note that Lemma \ref{lemma:psi-recurrence} closely resembles the recurrence relation followed by the squared $L^2$ distance from the optimum of stochastic gradient descent (SGD) on a strongly convex function \cite{RSS12}. As $\Psi_n \to 0$, the incremental PCA algorithms we study have convergence rates of the same form as SGD in this scenario.

\section{Experiments}
\label{secexperiments}

When performing PCA in practice with massive $d$ and a large/growing dataset, an incremental method like that of Krasulina or Oja remains practically viable, even as quadratic-time and -memory algorithms become increasingly impractical. Arora et al. \cite{ACLS12} have a more complete discussion of the empirical necessity of incremental PCA algorithms, including a version of Oja's method which is shown to be extremely competitive in practice. 

Since the efficiency benefits of these types of algorithms are well understood, we now instead focus on the effect of the learning rate on the performance of Oja's algorithm (results for Krasulina's are extremely similar). We use the CMU PIE faces \cite{SBB03}, consisting of 11554 images of size $32 \times 32$, as a prototypical example of a dataset with most of its variance captured by a few PCs, as shown in Fig. 1. We set $n_0 = 0$.

\begin{figure}[h]
\begin{center}$
\begin{array}{ccc}
\includegraphics[width=2.6in]{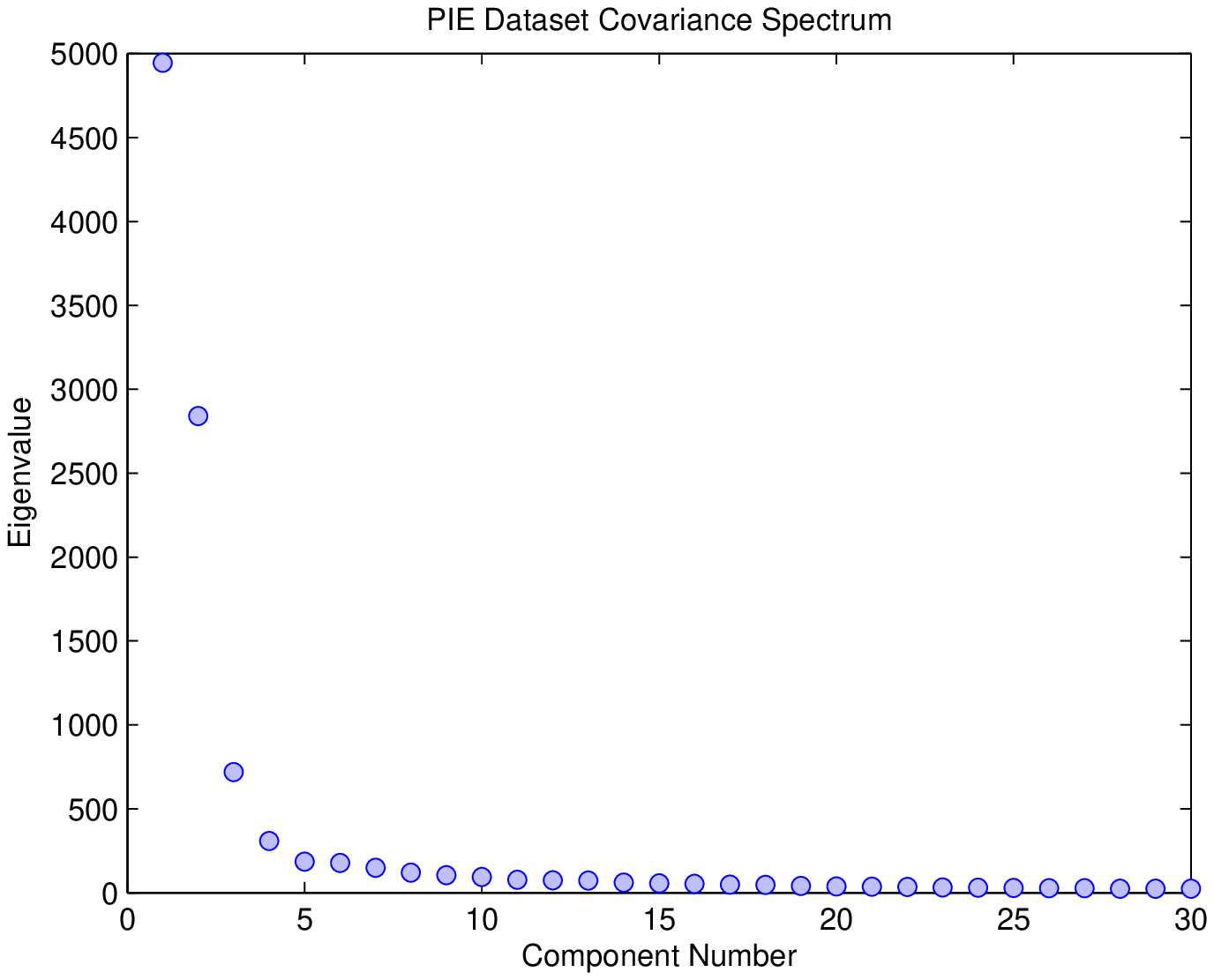} &
\includegraphics[width=2.6in]{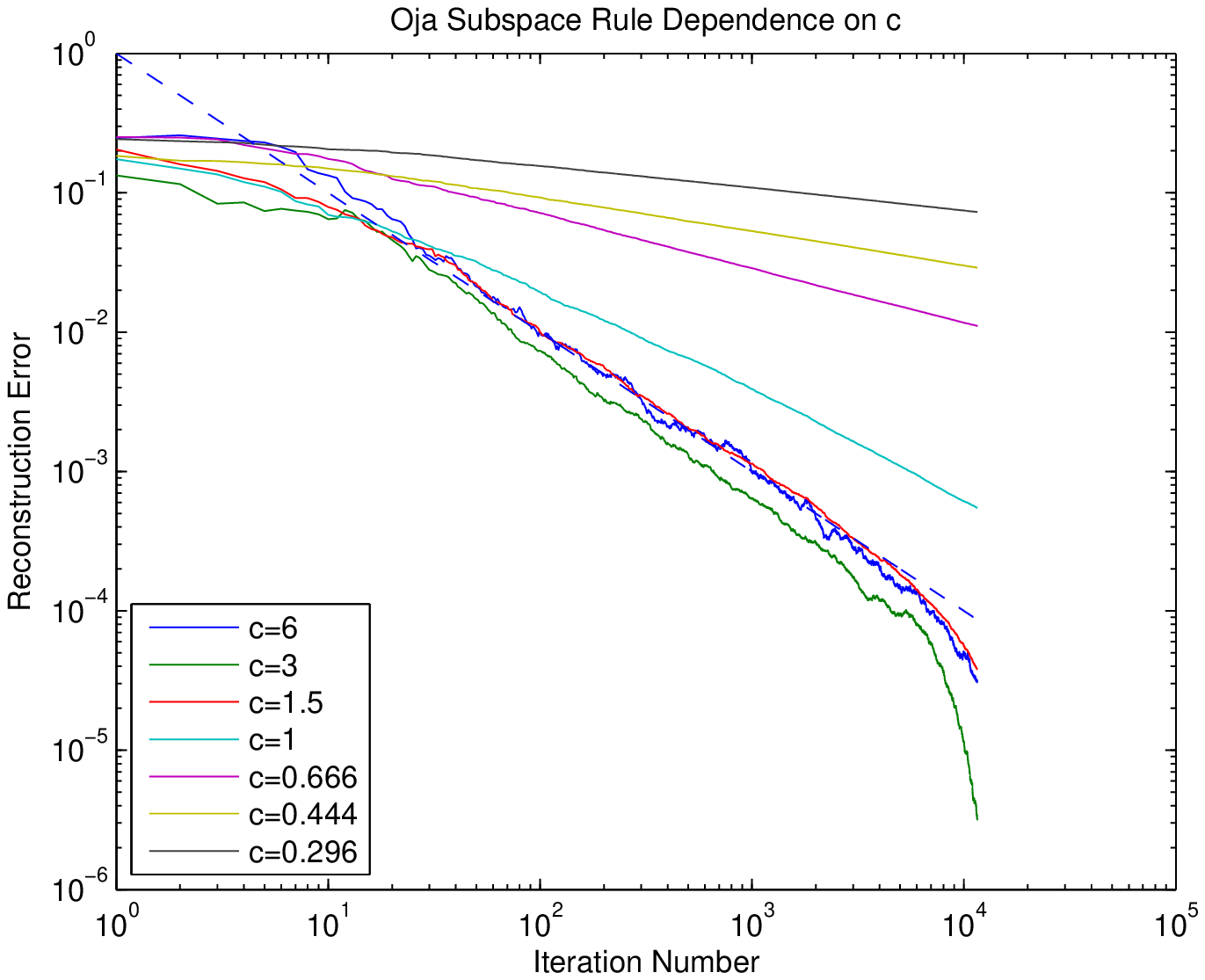}
\end{array}$
\end{center}
\caption*{Figures 1 and 2.}
\end{figure}

We expect from Theorem~\ref{thm:krasulina} and the discussion in the introduction that varying $c$ (the constant in the learning rate) will influence the overall rate of convergence. In particular, if $c$ is low, then halving it can be expected to halve the exponent of $n$, and the slope of the log-log convergence graph (ref. the remark after Thm. \ref{thm:krasulina}). This is exactly what occurs in practice, as illustrated in Fig. 2. The dotted line in that figure is a convergence rate of $1/n$, drawn as a guide.

\section{Open problems}

Several fundamental questions remain unanswered. First, the convergence rates of the two incremental schemes depend on the multiplier $c$ in the learning rate $\gamma_n$. If it is too low, convergence will be slower than $O(1/n)$. If it is too high, the constant in the rate of convergence will be large. Is there a simple and practical scheme for setting $c$? 

Second, what can be said about incrementally estimating the top $p$ eigenvectors, for $p > 1$? Both methods we consider extend easily to this case \cite{OK85}; the estimate at time $n$ is a $d \times p$ matrix $V_n$ whose columns correspond to the eigenvectors, with the invariant $V_n^T V_n = I_p$ always maintained. In Oja's algorithm, for instance, when a new data point $X_n \in \R^d$ arrives, the following update is performed:
\begin{align*}
W_n &= V_{n-1} + \gamma_n X_n X_n^T V_{n-1} \\
V_n &= \orth(W_n) 
\end{align*}
where the second step orthonormalizes the columns, for instance by Gram-Schmidt. It would be interesting to characterize the rate of convergence of this scheme.

Finally, our analysis applies to a modified procedure in which the starting time $n_o$ is artificially set to a large constant. This seems unnecessary in practice, and it would be useful to extend the analysis to the case where $n_o = 0$.

\subsection*{Acknowledgments}
The authors are grateful to the National Science Foundation for support under grant IIS-1162581.

\bibliography{pca-nips-arxiv}{}
\bibliographystyle{plain}

\newpage
\appendix

\section{Expected per-step change in potential}

\subsection{The change in potential of Krasulina's update}

Write Krasulina's update equation as
\begin{align*}
V_n & =  V_{n-1} + \gamma_n \xi_n \\
\xi_n & = \left( X_n X_n^T - \wV_{n-1}^T X_n X_n^T \wV_{n-1} I_d \right) V_{n-1}
\end{align*}

We start with some basic observations.

\begin{lem} For all $n > n_o$,
\begin{enumerate}
\item[(a)] $\xi_n$ is orthogonal to $V_{n-1}$.
\item[(b)] $\|\xi_n\|^2 \leq B^2 \|V_{n-1}\|^2 / 4$.
\item[(c)] $\E[\xi_n | \F_{n-1}] = AV_{n-1} - G(V_{n-1})V_{n-1}$.
\item[(d)] $\|V_n\| \geq \|V_{n-1}\|$.
\end{enumerate}
\label{lemma:xi}
\end{lem}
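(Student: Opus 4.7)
My plan is to prove the four parts essentially by unpacking the definition of $\xi_n$ and doing direct computations, exploiting that $\widehat{V}_{n-1}^T X_n X_n^T \widehat{V}_{n-1} = (X_n\cdot\widehat{V}_{n-1})^2$ is a nonnegative scalar. Concretely, write $\alpha_n := (X_n\cdot \widehat V_{n-1})^2$ so that $\xi_n = X_nX_n^T V_{n-1} - \alpha_n V_{n-1}$. This compact form makes (a), (b), (d) completely mechanical and (c) a one-line application of independence.

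For (a), I would just compute $V_{n-1}^T\xi_n$. Since $\widehat V_{n-1} = V_{n-1}/\|V_{n-1}\|$, the subtracted term equals $\alpha_n\|V_{n-1}\|^2 = V_{n-1}^T X_nX_n^T V_{n-1}$, which exactly cancels the first term. For (c), condition on $\F_{n-1}$: the scalar $\alpha_n = \widehat V_{n-1}^T X_n X_n^T\widehat V_{n-1}$ is linear in $X_nX_n^T$, so taking expectation and using $\E[X_nX_n^T\mid\F_{n-1}] = A$ together with $\widehat V_{n-1}^T A\widehat V_{n-1} = G(V_{n-1})$ yields the stated formula. For (d), use (a) and Pythagoras: $\|V_n\|^2 = \|V_{n-1}\|^2 + \gamma_n^2\|\xi_n\|^2 \geq \|V_{n-1}\|^2$.

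The only part requiring a little care is (b). Using (a), the decomposition $X_nX_n^T V_{n-1} = \xi_n + \alpha_n V_{n-1}$ is an orthogonal one, so
\[
\|\xi_n\|^2 = \|X_nX_n^T V_{n-1}\|^2 - \alpha_n^2\|V_{n-1}\|^2.
\]
Since $X_nX_n^T V_{n-1} = (X_n\cdot V_{n-1})X_n$, the first norm-squared equals $(X_n\cdot V_{n-1})^2\|X_n\|^2 = \alpha_n\|V_{n-1}\|^2\|X_n\|^2$. Factoring gives
\[
\|\xi_n\|^2 = \|V_{n-1}\|^2\,\alpha_n(\|X_n\|^2 - \alpha_n).
\]
By Cauchy--Schwarz $0\le \alpha_n\le\|X_n\|^2$, so AM--GM bounds $\alpha_n(\|X_n\|^2-\alpha_n)\le \|X_n\|^4/4$. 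Assumption (A2) gives $\|X_n\|^2\le B$, so $\|X_n\|^4\le B^2$, and the required inequality follows.

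I do not expect a real obstacle here: everything reduces to rewriting $\xi_n$ and using that $\widehat V_{n-1}$ is a unit vector together with the bound $\|X_n\|^2\le B$. The mildest subtlety is remembering in (b) that the relevant scalar bound is $\alpha(B-\alpha)\le B^2/4$ rather than $B/4$, which is where (A2) enters exactly once.
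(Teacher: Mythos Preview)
Your proposal is correct and essentially matches the paper's proof. The only cosmetic difference is that the paper introduces $X_n^\bot$, the component of $X_n$ orthogonal to $V_{n-1}$, and writes $\xi_n = (V_{n-1}\cdot X_n)X_n^\bot$ directly, whereas you reach the same expression for $\|\xi_n\|^2$ via Pythagoras on the decomposition $X_nX_n^T V_{n-1} = \xi_n + \alpha_n V_{n-1}$; in both cases part~(b) reduces to $\|V_{n-1}\|^2\,\alpha_n(\|X_n\|^2-\alpha_n)\le \|V_{n-1}\|^2\|X_n\|^4/4$.
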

\begin{proof}
For (a), let $X_n^{\bot}$ denote the component of $X_n$ orthogonal to $V_{n-1}$.
Then
$$ 
\xi_n 
= 
(V_{n-1} \cdot X_n) X_n - (\wV_{n-1} \cdot X_n)^2 V_{n-1} 
= 
(V_{n-1} \cdot X_n) (X_n - (\wV_{n-1} \cdot X_n) \wV_{n-1}) 
= 
(V_{n-1} \cdot X_n) X_n^\bot .
$$

For (b), note from the previous formulation that $\|\xi_n\|^2 = (V_{n-1} \cdot X_n)^2 \|X_n^\bot\|^2 \leq \|V_{n-1}\|^2 \|X_n\|^4 / 4$.

Part (c) follows directly from $\E[X_nX_n^T | \F_{n-1}] = A$.

For (d), we use
$\|V_n\|^2 = \|V_{n-1} + \gamma_n \xi_n\|^2 = \|V_{n-1}\|^2 + \gamma_n^2 \|\xi_n\|^2 \geq \|V_{n-1}\|^2$.
\end{proof}

We now check that $(V_n \cdot v^*)^2$ grows in expectation with each iteration.
\begin{lem}
For any $n > n_o$, we have
\begin{enumerate}
\item[(a)] $(V_n \cdot v^*)^2 \geq (V_{n-1} \cdot v^*)^2 + 2 \gamma_n (V_{n-1} \cdot v^*)(\xi_n \cdot v^*)$.
\item[(b)] $\E[\xi_n \cdot v^* | \F_{n-1}] = (V_{n-1} \cdot v^*)(\lambda_1 - G(V_{n-1}))$.
\end{enumerate}
\label{lemma:update-size}
\end{lem}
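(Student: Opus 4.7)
My plan is to prove both parts by direct computation, leveraging the explicit formula for $\xi_n$ derived in Lemma~\ref{lemma:xi}(a), namely $\xi_n = (V_{n-1}\cdot X_n) X_n - (\widehat V_{n-1}\cdot X_n)^2 V_{n-1}$, together with the fact that $v^*$ is the unit top eigenvector so that $Av^* = \lambda_1 v^*$.

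For part (a), I would simply expand $V_n = V_{n-1} + \gamma_n \xi_n$ and square the inner product with $v^*$:
\begin{equation*}
(V_n \cdot v^*)^2 = (V_{n-1}\cdot v^*)^2 + 2\gamma_n (V_{n-1}\cdot v^*)(\xi_n\cdot v^*) + \gamma_n^2 (\xi_n\cdot v^*)^2.
\end{equation*}
Dropping the nonnegative last term gives the stated inequality. This is the easy half.

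For part (b), I would dot the explicit formula for $\xi_n$ into $v^*$ to get
\begin{equation*}
\xi_n \cdot v^* = (V_{n-1}\cdot X_n)(X_n\cdot v^*) - (\widehat V_{n-1}\cdot X_n)^2 (V_{n-1}\cdot v^*),
\end{equation*}
and then take conditional expectations given $\F_{n-1}$. Since $V_{n-1}$ is $\F_{n-1}$-measurable while $X_n$ is independent of $\F_{n-1}$ with $\E[X_n X_n^T] = A$, the first term becomes $V_{n-1}^T A v^* = \lambda_1 (V_{n-1}\cdot v^*)$ (using $Av^* = \lambda_1 v^*$), and the coefficient $\E[(\widehat V_{n-1}\cdot X_n)^2 \mid \F_{n-1}] = \widehat V_{n-1}^T A \widehat V_{n-1} = G(V_{n-1})$ in the second term. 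Combining yields $(V_{n-1}\cdot v^*)(\lambda_1 - G(V_{n-1}))$, as claimed.

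There is no real obstacle here; the only subtlety is bookkeeping, namely making sure to pull the $\F_{n-1}$-measurable factors $V_{n-1}$ and $\widehat V_{n-1}$ outside the conditional expectation before invoking $\E[X_n X_n^T \mid \F_{n-1}] = A$, and using the eigenvector identity $Av^* = \lambda_1 v^*$ to collapse $V_{n-1}^T A v^*$ into $\lambda_1 (V_{n-1}\cdot v^*)$. Both parts are essentially one-line computations once the formula for $\xi_n$ from Lemma~\ref{lemma:xi} is in hand.
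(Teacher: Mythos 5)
Your proof is correct and follows essentially the same approach as the paper. The only cosmetic difference is in part (b): the paper invokes Lemma~\ref{lemma:xi}(c) directly for $\E[\xi_n\mid\F_{n-1}] = AV_{n-1} - G(V_{n-1})V_{n-1}$, whereas you re-derive that conditional expectation from the explicit formula for $\xi_n$ before applying $Av^* = \lambda_1 v^*$; the computation is the same.
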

\begin{proof}
Part (a) follows directly from the update rule:
$$ 
(V_n \cdot v^*)^2 
= 
((V_{n-1}\cdot v^*) + \gamma_n (\xi_n \cdot v^*))^2 
\geq 
(V_{n-1} \cdot v^*)^2 + 2 \gamma_n (V_{n-1} \cdot v^*) (\xi_n \cdot v^*).
$$
Part (b) follows by substituting the expression for $\E[\xi_n | \F_{n-1}]$ from Lemma~\ref{lemma:xi}(c):
$$
\E[\xi_n \cdot v^* | \F_{n-1}] 
= 
(V_{n-1}^T A v^*) - G(V_{n-1}) (V_{n-1} \cdot v^*)
= 
\lambda_1 (V_{n-1} \cdot v^*) - G(V_{n-1}) (V_{n-1} \cdot v^*)
.$$
\end{proof}

In order to use Lemma~\ref{lemma:update-size} to bound the change in potential $\Psi_n$, we need to relate $\Psi_n$ to the quantity $\lambda_1 - G(V_n)$.
\begin{lem}
For any $n \geq n_o$, we have $\lambda_1 - G(V_n) \geq (\lambda_1 - \lambda_2) \Psi_n$.
\label{lemma:gap}
\end{lem}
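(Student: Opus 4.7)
The plan is to write $\widehat{V}_n$ in an eigenbasis of $A$ and then compute both sides explicitly. Since $A$ is symmetric (it is a covariance matrix) with eigenvalues $\lambda_1 \geq \lambda_2 \geq \cdots \geq \lambda_d$, choose an orthonormal eigenbasis $u_1, u_2, \ldots, u_d$ with $u_1 = v^*$. Expand $\widehat{V}_n = \sum_{i=1}^d a_i u_i$, so that $\sum_i a_i^2 = 1$.

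With this expansion, both quantities reduce to simple sums. First, $G(V_n) = \widehat{V}_n^T A \widehat{V}_n = \sum_i a_i^2 \lambda_i$, using the fact that the Rayleigh quotient depends only on the unit vector $\widehat{V}_n$. Second, $(\widehat{V}_n \cdot v^*)^2 = a_1^2$, so $\Psi_n = 1 - a_1^2 = \sum_{i \geq 2} a_i^2$. The claim then becomes
\[
\lambda_1 - \sum_i a_i^2 \lambda_i \;\geq\; (\lambda_1 - \lambda_2) \sum_{i \geq 2} a_i^2.
\]

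To see this, I would rewrite the left-hand side using $\sum_i a_i^2 = 1$ as $\sum_i a_i^2 (\lambda_1 - \lambda_i) = \sum_{i \geq 2} a_i^2 (\lambda_1 - \lambda_i)$, since the $i=1$ term vanishes. Then bound each coefficient: for $i \geq 2$ we have $\lambda_1 - \lambda_i \geq \lambda_1 - \lambda_2$ by the ordering of eigenvalues. Summing gives the claim.

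There is no real obstacle here; the lemma is essentially the standard observation that the eigengap controls the deficit of the Rayleigh quotient in terms of the angular distance to the top eigenvector. The only things to be careful about are (i) invoking that $\widehat{V}_n$ is well-defined, which is guaranteed because Lemma~\ref{lemma:xi}(d) shows $\|V_n\| \geq \|V_{n_o}\| > 0$, and (ii) noting that the bound is tight when all of the mass of $\widehat{V}_n$ outside the top eigenspace sits along $u_2$, which reassures us that no stronger inequality of this form is available without further spectral information.
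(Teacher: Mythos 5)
Your proof is correct and follows essentially the same route as the paper: decompose $\widehat{V}_n$ relative to the eigenspace of $A$ and bound the contribution of everything outside the $v^*$-direction using $\lambda_i \leq \lambda_2$ for $i\geq 2$. The paper only splits $V_n$ into its $v^*$-component and the orthogonal remainder and bounds the latter's Rayleigh quotient by $\lambda_2$, while you expand in a full orthonormal eigenbasis, but the underlying inequality is identical.
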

\begin{proof}
It is easiest to think of $V_n$ in the eigenbasis of $A$: the component of $V_n$ in direction $v^*$ is $V_n \cdot v^*$, and the orthogonal component is $V_n^\bot = V_n - (V_n \cdot v^*)v^*$. Then
$$ G(V_n) 
= 
\frac{V_n^T A V_n}{\|V_n\|^2}
= 
\frac{(V_n \cdot v^*)^2}{\|V_n\|^2} \lambda_1 + \frac{(V_n^\bot)^T A V_n^{\bot}}{\|V_n\|^2} 
\leq
\frac{\lambda_1(V_n \cdot v^*)^2 + \lambda_2 \|V_n^\bot\|^2}{\|V_n\|^2} .
$$
Therefore,
$$
\lambda_1 - G(V_n)
\geq 
\lambda_1 - \frac{\lambda_1(V_n \cdot v^*)^2 + \lambda_2 (\|V_n\|^2 - (V_n \cdot v^*)^2)}{\|V_n\|^2} 
=
(\lambda_1 - \lambda_2) \left( 1 - \frac{(V_n \cdot v^*)^2}{\|V_n\|^2} \right)
=
(\lambda_1 - \lambda_2) \Psi_n .
$$
\end{proof}

We can now explicitly bound the expected change in $\Psi_n$ in each iteration.

\begin{lem}
For any $n > n_o$, we can write $\Psi_n \leq \Psi_{n-1} + \beta_n - Z_n$, where $\beta_n = \gamma_n^2 B^2 /4$ and where 
$$ Z_n = 2 \gamma_n (V_{n-1} \cdot v^*)(\xi_n \cdot v^*)/\|V_{n-1}\|^2$$
is a $\F_n$-measurable random variable with the following properties:
\begin{itemize}
\item $\E[Z_n | \F_{n-1}] = 2 \gamma_n (\widehat{V}_{n-1} \cdot v^*)^2 (\lambda_1 - G(V_{n-1})) \geq 2 \gamma_n (\lambda_1 - \lambda_2) \Psi_{n-1}(1-\Psi_{n-1}) \geq 0$.
\item $|Z_n| \leq 4 \gamma_n B$.
\end{itemize}
\label{lemma:delta-psi}
\end{lem}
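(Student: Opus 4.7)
}

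The plan is to expand both $(V_n \cdot v^*)^2$ and $\|V_n\|^2$ using the update rule $V_n = V_{n-1} + \gamma_n \xi_n$, form the ratio, and then carefully compare it to $(V_{n-1}\cdot v^*)^2/\|V_{n-1}\|^2$. By Lemma~\ref{lemma:xi}(a), $\xi_n \perp V_{n-1}$, so
\[ \|V_n\|^2 = \|V_{n-1}\|^2 + \gamma_n^2 \|\xi_n\|^2, \]
while
\[ (V_n\cdot v^*)^2 = (V_{n-1}\cdot v^*)^2 + 2\gamma_n(V_{n-1}\cdot v^*)(\xi_n\cdot v^*) + \gamma_n^2(\xi_n\cdot v^*)^2. \]
Dropping the nonnegative $\gamma_n^2(\xi_n\cdot v^*)^2$ term in the numerator (this only makes $\Psi_n$ larger and so preserves the inequality direction we want) gives an upper bound on $\Psi_n$ of the form $1-(p+q)/(s+t)$ in shorthand, where $p = (V_{n-1}\cdot v^*)^2$, $q = 2\gamma_n(V_{n-1}\cdot v^*)(\xi_n\cdot v^*)$, $s = \|V_{n-1}\|^2$, and $t = \gamma_n^2\|\xi_n\|^2$.

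Next I would algebraically rearrange $\Psi_n - \Psi_{n-1} + Z_n$, where $Z_n = q/s$, by writing $1/(s+t) = 1/s - t/(s(s+t))$. A short computation then collapses the expression to exactly $(p+q)\,t/(s(s+t))$. To bound this by $\beta_n$, observe that $p+q \leq p+q+\gamma_n^2(\xi_n\cdot v^*)^2 = (V_n\cdot v^*)^2 \leq \|V_n\|^2 = s+t$ (and the bound is trivial when $p+q<0$), so $(p+q)t/(s(s+t)) \leq t/s = \gamma_n^2\|\xi_n\|^2/\|V_{n-1}\|^2 \leq \gamma_n^2 B^2/4 = \beta_n$ by Lemma~\ref{lemma:xi}(b). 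That establishes $\Psi_n \leq \Psi_{n-1} + \beta_n - Z_n$.

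For the two properties of $Z_n$: The conditional expectation is immediate, by pulling the $\F_{n-1}$-measurable factor $2\gamma_n(V_{n-1}\cdot v^*)/\|V_{n-1}\|^2$ out and applying Lemma~\ref{lemma:update-size}(b) to get $2\gamma_n(\widehat{V}_{n-1}\cdot v^*)^2(\lambda_1 - G(V_{n-1}))$; then Lemma~\ref{lemma:gap} together with $(\widehat{V}_{n-1}\cdot v^*)^2 = 1-\Psi_{n-1}$ yields the stated lower bound of $2\gamma_n(\lambda_1-\lambda_2)\Psi_{n-1}(1-\Psi_{n-1}) \geq 0$. For the uniform bound, apply Cauchy--Schwarz to both inner products: $|V_{n-1}\cdot v^*| \leq \|V_{n-1}\|$ and $|\xi_n\cdot v^*| \leq \|\xi_n\| \leq B\|V_{n-1}\|/2$ by Lemma~\ref{lemma:xi}(b), so $|Z_n| \leq \gamma_n B \leq 4\gamma_n B$.

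The only nontrivial step is the denominator mismatch: $Z_n$ is defined with $\|V_{n-1}\|^2$ in the denominator, but the potential $\Psi_n$ naturally has $\|V_n\|^2$ there. The main obstacle is showing the resulting residual is absorbed by $\beta_n$, and the key trick is the identity $1/(s+t) = 1/s - t/(s(s+t))$ together with the crude bound $(V_n\cdot v^*)^2 \leq \|V_n\|^2$ to cancel the extra $(p+q)$ factor.
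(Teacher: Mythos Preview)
Your proof is correct and reaches the same conclusion as the paper, but the route is slightly different in one place. Where you treat the ``denominator mismatch'' via the identity $1/(s+t)=1/s - t/(s(s+t))$ and then bound $(p+q)/(s+t)\leq 1$, the paper sidesteps this entirely: since the numerator $\|V_n\|^2-(V_n\cdot v^*)^2$ of $\Psi_n$ is nonnegative and $\|V_n\|\geq\|V_{n-1}\|$ (Lemma~\ref{lemma:xi}(d)), one may simply replace the denominator $\|V_n\|^2$ by $\|V_{n-1}\|^2$ at the outset, obtaining
\[
\Psi_n \leq \frac{\|V_{n-1}\|^2+\gamma_n^2\|\xi_n\|^2-(V_n\cdot v^*)^2}{\|V_{n-1}\|^2},
\]
after which the bound $\Psi_n\leq\Psi_{n-1}+\beta_n-Z_n$ falls out in two lines. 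Your partial-fraction manipulation is a fine alternative and in fact recovers exactly the same residual $t/s$; the paper's version is just shorter. Separately, your Cauchy--Schwarz bound $|Z_n|\leq \gamma_n B$ is sharper than the paper's $4\gamma_n B$ (the paper expands $\xi_n$ in terms of $X_n$ and bounds term by term), so you lose nothing there.
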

\begin{proof}
Using Lemmas~\ref{lemma:xi} and \ref{lemma:update-size}(a),
\begin{align*}
\Psi_n 
= 
\frac{\|V_n\|^2 - (V_n \cdot v^*)^2}{\|V_n\|^2} 
& \leq 
\frac{\|V_{n-1}\|^2 + \gamma_n^2 \|\xi_n\|^2 - (V_n \cdot v^*)^2}{\|V_{n-1}\|^2} \\
& \leq
1 + \frac{1}{4} \gamma_n^2 B^2 - \frac{(V_n \cdot v^*)^2}{\|V_{n-1}\|^2} \\
& \leq
1 + \frac{1}{4} \gamma_n^2 B^2 - \frac{(V_{n-1} \cdot v^*)^2 + 2 \gamma_n (V_{n-1} \cdot v^*)(\xi_n \cdot v^*)}{\|V_{n-1}\|^2} \\
& = 
\Psi_{n-1} + \frac{1}{4} \gamma_n^2 B^2 - 2 \gamma_n \frac{(V_{n-1} \cdot v^*)(\xi_n \cdot v^*)}{\|V_{n-1}\|^2},
\end{align*}
which is $\Psi_{n-1} + \beta_n - Z_n$. 
The conditional expectation of $Z_n$ can be determined from Lemma~\ref{lemma:update-size}(b):
$$
\E[Z_n | \F_{n-1}] 
= 
\frac{2 \gamma_n (V_{n-1} \cdot v^*)}{\|V_{n-1}\|^2} \E[\xi_n \cdot v^* | \F_{n-1}]
= 2 \gamma_n (\wV_{n-1} \cdot v^*)^2 (\lambda_1 - G(V_{n-1}))
$$
and this can be lower-bounded using Lemma~\ref{lemma:gap}.

Finally, we need to determine the range of possible values of $Z_n$. By expanding $\xi_n$, we get
$$ 
Z_n 
= 
2 \gamma_n (\wV_{n-1} \cdot v^*) 
\left((X_n \cdot v^*)(X_n \cdot \wV_{n-1}) - (\wV_{n-1} \cdot v^*)(X_n \cdot \wV_{n-1})^2\right)
.$$
Since $\|X_n\|^2 \leq B$, we see that $Z_n$ must lie in the range $\pm 4\gamma_n B$.
\end{proof}

\subsection{The change in potential of the Oja update}

Recall the Oja update:
$$
V_n \ = \  \frac{V_{n-1} + \gamma_n X_n X_n^T V_{n-1}}{\|V_{n-1} + \gamma_n X_n X_n^T V_{n-1}\|}.
$$
Since our bounds are on the potential function $\Psi_n$, which is insensitive to the length of $V_n$, we can skip the normalization, and instead just consider the update rule
$$
V_n \ = \  V_{n-1} + \gamma_n X_n X_n^T V_{n-1}.
$$

The final bounds, as well as many of the intermediate results, are almost exactly the same as for Krasulina's estimator. Here is the analogue of Lemma~\ref{lemma:delta-psi}.

\begin{lem}
For any $n > n_o$, we can write $\Psi_n \leq \Psi_{n-1} - Z_n + \beta_n$, where $Z_n$ is the same as in Lemma~\ref{lemma:delta-psi} and $\beta_n = 5\gamma_n^2 B^2 + 2 \gamma_n^3 B^3$.
\label{lemma:delta-psi-ko}
\end{lem}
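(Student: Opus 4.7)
The plan is to reduce to the Krasulina analysis by rewriting the (unnormalized) Oja step in a form that separates out the Krasulina residual. Since
$$X_n X_n^T V_{n-1} \;=\; (X_n \cdot \widehat{V}_{n-1})^2\, V_{n-1} + \xi_n,$$
the Oja update becomes $V_n = \alpha V_{n-1} + \gamma_n \xi_n$ with $\alpha := 1 + \gamma_n (X_n\cdot \widehat{V}_{n-1})^2$. Here $\xi_n$ is precisely the Krasulina residual, so it is orthogonal to $V_{n-1}$ and obeys $\|\xi_n\|^2 \leq B^2\|V_{n-1}\|^2/4$ by Lemma~\ref{lemma:xi}; and Assumption (A2) gives $\alpha \in [1,\,1+\gamma_n B]$. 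Crucially, the random variable $Z_n$ from Lemma~\ref{lemma:delta-psi} is defined purely in terms of $V_{n-1}, \xi_n, v^*$, and can therefore be reused verbatim, inheriting the formula for $\E[Z_n \mid \F_{n-1}]$ and the bound $|Z_n|\leq 4\gamma_n B$.

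The orthogonality of $\xi_n$ and $V_{n-1}$ yields the clean norm $\|V_n\|^2 = \alpha^2\|V_{n-1}\|^2 + \gamma_n^2\|\xi_n\|^2$. Expanding the numerator of $\Psi_n$,
$$\|V_n\|^2 - (V_n\cdot v^*)^2 \;=\; \alpha^2\|V_{n-1}\|^2 \Psi_{n-1} + \gamma_n^2\bigl(\|\xi_n\|^2 - (\xi_n\cdot v^*)^2\bigr) - 2\alpha\gamma_n(V_{n-1}\cdot v^*)(\xi_n \cdot v^*),$$
I would drop the nonpositive $-\gamma_n^2(\xi_n\cdot v^*)^2$ contribution and divide by $\|V_n\|^2 = \alpha^2\|V_{n-1}\|^2(1+r)$, where $r := \gamma_n^2\|\xi_n\|^2/(\alpha^2\|V_{n-1}\|^2) \leq \gamma_n^2 B^2/4$. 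After simplification,
$$ \Psi_n \;\leq\; \Psi_{n-1} + \frac{r(1-\Psi_{n-1})}{1+r} \;-\; \frac{Z_n}{\alpha(1+r)}. $$

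It remains to reshape the right-hand side into $\Psi_{n-1} - Z_n + \beta_n$. The additive term is immediate: $r(1-\Psi_{n-1})/(1+r) \leq r \leq \gamma_n^2 B^2/4$. For the multiplicative distortion of $Z_n$, write $-Z_n/(\alpha(1+r)) = -Z_n + Z_n\cdot[\alpha(1+r)-1]/[\alpha(1+r)]$ and bound the slack two-sidedly by $|Z_n|\cdot(\alpha(1+r)-1)$, using $\alpha(1+r)-1 = (\alpha-1) + \alpha r \leq \gamma_n B + \gamma_n^2 B^2/4$ and $|Z_n|\leq 4\gamma_n B$. This contributes at most $4\gamma_n^2 B^2 + \gamma_n^3 B^3$, so overall $\beta_n \leq \tfrac{17}{4}\gamma_n^2 B^2 + \gamma_n^3 B^3$, safely inside the stated $5\gamma_n^2 B^2 + 2\gamma_n^3 B^3$.

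The main (and only mild) obstacle is keeping signs straight: since $Z_n$ and $(V_{n-1}\cdot v^*)(\xi_n\cdot v^*)$ can be of either sign, I cannot absorb $-Z_n/(\alpha(1+r))$ into $-Z_n$ via a one-sided inequality and must pay the symmetric error above. Once the decomposition $V_n = \alpha V_{n-1} + \gamma_n \xi_n$ is in place, the rest is routine algebra, and the structural parallel with the Krasulina proof is what makes the subsequent sections of the paper applicable to both updates with essentially identical bookkeeping.
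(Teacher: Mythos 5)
Your proof is correct, and it takes a genuinely different (and arguably tidier) route than the paper's. The paper expands $(V_n\cdot v^*)^2$ and $\|V_n\|^2$ directly from the unnormalized Oja step, lower-bounds their ratio via $\tfrac{1}{1+x}\geq 1-x$, and then collects error terms by ``extra algebra that we have omitted.'' You instead rewrite the Oja step as $V_n = \alpha V_{n-1} + \gamma_n \xi_n$ with $\alpha = 1 + \gamma_n(X_n\cdot\widehat{V}_{n-1})^2$, which exploits the orthogonality $\xi_n \perp V_{n-1}$ to get the exact Pythagorean split $\|V_n\|^2 = \alpha^2\|V_{n-1}\|^2 + \gamma_n^2\|\xi_n\|^2$. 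This buys two things the paper's calculation obscures: (i) the Krasulina variable $Z_n = 2\gamma_n(V_{n-1}\cdot v^*)(\xi_n\cdot v^*)/\|V_{n-1}\|^2$ emerges as the \emph{exact} cross term rather than having to be recognized post hoc inside a product expansion, so its conditional-mean and range properties are inherited verbatim; and (ii) the error bookkeeping is reduced to two clean pieces, an additive slack $r(1-\Psi_{n-1})/(1+r)\leq \gamma_n^2 B^2/4$ from the norm inflation and a multiplicative slack $|Z_n|\,(\alpha(1+r)-1)\leq 4\gamma_n^2 B^2 + \gamma_n^3 B^3$ from the distortion of $Z_n$. Your observation that this second slack must be paid symmetrically (since $Z_n$ has no definite sign) is exactly the right care to take. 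The resulting bound $\tfrac{17}{4}\gamma_n^2 B^2 + \gamma_n^3 B^3$ is strictly inside the paper's $5\gamma_n^2 B^2 + 2\gamma_n^3 B^3$, so the lemma holds with margin to spare.
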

\begin{proof}
This is a series of calculations. First,
\begin{align*}
(V_n \cdot v^*)^2 
&= 
((V_{n-1} \cdot v^*) + \gamma_n (V_{n-1}^T X_n X_n^T v^*))^2 \\
& \geq
(V_{n-1} \cdot v^*)^2 + 2 \gamma_n (V_{n-1} \cdot v^*)(V_{n-1}^T X_n X_n^T v^*) .
\end{align*}
Similarly,
\begin{align*}
\|V_n\|^2
&= 
\|V_{n-1} + \gamma_n X_n X_n^T V_{n-1}\|^2 \\
&=
\|V_{n-1}\|^2 + \gamma_n^2 \|X_nX_n^T V_{n-1}\|^2 + 2 \gamma_n (V_{n-1} \cdot X_n)^2 \\
&\leq \|V_{n-1}\|^2 (1 + \gamma_n^2 B^2 + 2 \gamma_n(\wV_{n-1} \cdot X_n)^2)
\end{align*}
where we have used $\|X_n\|^2 \leq B$. Combining these,
\begin{align*}
\frac{(V_n \cdot v^*)^2}{\|V_n\|^2}
&\geq
\frac{(V_{n-1} \cdot v^*)^2 + 2 \gamma_n (V_{n-1} \cdot v^*)(V_{n-1}^T X_n X_n^T v^*)}{\|V_{n-1}\|^2 (1 + \gamma_n^2 B^2 + 2 \gamma_n(\wV_{n-1} \cdot X_n)^2)} \\
&=
\frac{(\wV_{n-1} \cdot v^*)^2 + 2 \gamma_n (\wV_{n-1} \cdot v^*)(\wV_{n-1}^T X_n X_n^T v^*)}{1 + \gamma_n^2 B^2 + 2 \gamma_n(\wV_{n-1} \cdot X_n)^2} \\
&\geq
\left((\wV_{n-1} \cdot v^*)^2 + 2 \gamma_n (\wV_{n-1} \cdot v^*)(\wV_{n-1}^T X_n X_n^T v^*)\right) \left( 1 - \gamma_n^2 B^2 - 2 \gamma_n(\wV_{n-1} \cdot X_n)^2 \right) \\
&\geq
(\wV_{n-1} \cdot v^*)^2 + 2 \gamma_n (\wV_{n-1} \cdot v^*) \left( \wV_{n-1}^T X_n X_n^T v^* - (\wV_{n-1} \cdot v^*) (\wV_{n-1} \cdot X_n)^2 \right) - 5 \gamma_n^2 B^2 - 2 \gamma_n^3 B^3 
\end{align*}
where the final step involves some extra algebra that we have omitted. The lemma now follows by invoking $\Psi_n = 1 - (\wV_n \cdot v^*)^2$.
\end{proof}

\section{A large deviation bound for $\Psi_n$}

\subsection{Proof of Lemma~\ref{lemma:mgf-recurrence}}

For any $t > 0$,
\begin{align*}
\E\left[e^{t Y_n} | \F_{n-1}\right] 
& \leq \E\left[ e^{t (Y_{n-1} + \beta_n - Z_n)} | \F_{n-1}\right] \\
& = e^{t (Y_{n-1} + \beta_n)} \E\left[ e^{-tZ_n} | \F_{n-1}\right] \\
& =  e^{t (Y_{n-1} + \beta_n)} \, \E\left[e^{-t \E[Z_n | \F_{n-1}]} e^{-t(Z_n - \E[Z_n | \F_{n-1}])} | \F_{n-1}\right] \\
& \leq  e^{t (Y_{n-1} + \beta_n - \E[Z_n | \F_{n-1}])} \, \E\left[e^{-t(Z_n - \E[Z_n | \F_{n-1}])} | \F_{n-1}\right] .
\end{align*} 
We bound the last expected value using Hoeffding's lemma: $\E[e^{tW}] \leq e^{t^2(b-a)^2/8}$ for any random variable $W$ of mean zero and range $[a,b]$.

\subsection{Proof of Lemma~\ref{lemma:large-deviation}}

By Lemma~\ref{lemma:mgf-recurrence},
\begin{align*}
\E\left[e^{t Y_n} | \F_{n-1}\right]
\leq
\exp \left( t Y_{n-1} + t \beta_n + \frac{t^2 \zeta_n^2}{8} \right)
.
\end{align*}
Now let's define an appropriate martingale. Let $\tau_n = \sum_{\ell > n} (\beta_\ell + t \zeta_\ell^2/8)$, and let $M_n = \exp(t (Y_n + \tau_n))$. Thus $M_n \in \F_n$, and 
$$
\E[M_n | \F_{n-1}] 
= 
\E[ e^{t Y_n}| \F_{n-1}] \exp (t \tau_n) 
\leq 
\exp \left(t Y_{n-1} + t \beta_n + \frac{t^2 \zeta_n^2}{8} + t\tau_n\right) 
= 
M_{n-1} .
$$ 
Thus $(M_n)$ is a positive-valued supermartingale adapted to $(\F_n)$. A version of Doob's martingale inequality---see, for instance, page 274 of \cite{D95}---then says that for any $m$, we have $\pr(\sup_{n \geq m} M_n \geq \delta) \leq (\E M_m)/\delta$. Using this, we see that for any $\Delta > 0$,
\begin{align*}
\pr \bigg(\sup_{n \geq m} Y_n \geq \Delta \bigg)
\ \leq \
\pr \bigg(\sup_{n \geq m} Y_n + \tau_n \geq \Delta \bigg) 
& = 
\pr \bigg(\sup_{n \geq m} M_n \geq e^{t\Delta} \bigg) \\
& \leq 
\frac{\E M_m}{e^{t\Delta}}
\ = \ 
\exp(- t (\Delta - \tau_m)) \E e^{tY_m}
\end{align*}

\subsection{Proof of Lemma~\ref{lemma:mgf}}

It is well known that $V$ can be chosen by picking $d$ values $Z = (Z_1, \ldots, Z_d)$ independently from the standard normal distribution and then setting $V = Z/\|Z\|$. Therefore,
$$ Y = \frac{Z_2^2 + \cdots + Z_d^2}{Z_1^2 + (Z_2^2 + \cdots + Z_d^2)} = \frac{W_1}{W_1 + W_2} ,$$
where $W_1$ is drawn from a chi-squared distribution with $d-1$ degrees of freedom and $W_2$ is drawn independently from a chi-squared distribution with one degree of freedom. This characterization implies that $Y$ follows the $\mbox{Beta}((d-1)/2, 1/2)$ distribution: specifically, for any $0 < y < 1$,
$$ \pr(Y = y) = \frac{\Gamma(\frac{d}{2})}{\Gamma(\frac{d-1}{2})\Gamma(\frac{1}{2})} \ y^{(d-3)/2}(1-y)^{-1/2} .$$
The moment-generating function of this distribution is 
$$
\E e^{tY} 
\ =\ 
\frac{\Gamma(\frac{d}{2})}{\Gamma(\frac{d-1}{2})\Gamma(\frac{1}{2})} \ \int_0^1 e^{ty} y^{(d-3)/2}(1-y)^{-1/2} dy .
$$
There isn't a closed form for this, but an upper bound on the integral can be obtained. Assuming $d \geq 3$,
\begin{align*}
\int_0^1 e^{ty} y^{(d-3)/2}(1-y)^{-1/2} dy 
& \leq 
\int_0^1 e^{ty} (1-y)^{-1/2} dy  \\
& = 
\frac{e^t}{\sqrt{t}} \int_0^t e^{-z} z^{-1/2} dz \\
& \leq 
\frac{e^t}{\sqrt{t}} \int_0^\infty e^{-z} z^{-1/2} dz 
\ = \ 
\frac{e^t}{\sqrt{t}} \Gamma(1/2),
\end{align*}
where the second step uses a change of variable $z = t(1-y)$, and the fourth uses the definition of the gamma function. To finish up, we use the inequality $\Gamma(z + 1/2) \leq \sqrt{z} \, \Gamma(z)$ (Lemma~\ref{lemma:gamma}) to get
$$
\E e^{tY} 
\ \leq \ 
\frac{\Gamma(\frac{d}{2})}{\Gamma(\frac{d-1}{2})} \frac{e^t}{\sqrt{t}} 
\ \leq\ 
e^t \sqrt{\frac{d-1}{2t}}.
$$
\\

The following inequality is doubtless standard; we give a short proof here because we are unable to find a reference.
\begin{lem}
For any $z > 0$,
$$ \Gamma\left(z + \frac{1}{2}\right) \leq \sqrt{z}\, \Gamma(z) .$$
\label{lemma:gamma}
\end{lem}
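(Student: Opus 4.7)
The plan is to prove the inequality by a direct application of the Cauchy--Schwarz inequality to the integral representation of $\Gamma$. This avoids invoking the Bohr--Mollerup characterization and keeps the argument self-contained.

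First I would write out $\Gamma(z+\tfrac12)$ using the defining integral,
\[
\Gamma(z+\tfrac12) \ =\ \int_0^\infty t^{z-1/2} e^{-t}\, dt,
\]
and observe the symmetric factorization
\[
t^{z-1/2} e^{-t} \ =\ \bigl(t^{z-1} e^{-t}\bigr)^{1/2} \bigl(t^{z} e^{-t}\bigr)^{1/2}.
\]
Then I would apply Cauchy--Schwarz in $L^2((0,\infty), dt)$ to the pair of factors on the right, which gives
\[
\Gamma(z+\tfrac12)^2 \ \leq\ \left(\int_0^\infty t^{z-1} e^{-t}\, dt\right)\!\left(\int_0^\infty t^{z} e^{-t}\, dt\right) \ =\ \Gamma(z)\,\Gamma(z+1).
\]

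The final step uses the functional equation $\Gamma(z+1) = z\,\Gamma(z)$, yielding $\Gamma(z+\tfrac12)^2 \leq z\,\Gamma(z)^2$, from which the result follows by taking square roots (all quantities being positive for $z>0$).

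There is really no hard step here: once one spots the symmetric splitting of the integrand, Cauchy--Schwarz does all the work. A purely aesthetic alternative would be to invoke log-convexity of $\Gamma$ (which itself is an immediate consequence of Hölder's inequality on the same integral) and apply it at the midpoint of $z$ and $z+1$ to obtain $\Gamma(z+\tfrac12)^2 \leq \Gamma(z)\Gamma(z+1)$ directly; this is the same inequality in different clothing, and I would mention it only if brevity mattered. The only mild point of care is ensuring $z>0$ so that all integrals converge and the integral representation is valid, which is exactly the hypothesis of the lemma.
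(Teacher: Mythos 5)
Your proof is correct. It differs from the paper's in framing rather than in substance: the paper introduces a random variable $T$ with density proportional to $t^{z-1}e^{-t}$ and applies Jensen's inequality for the concave function $\sqrt{\cdot}$ to conclude $\E\sqrt{T}\leq\sqrt{\E T}$, which unwinds to $\Gamma(z+\tfrac12)/\Gamma(z)\leq\sqrt{\Gamma(z+1)/\Gamma(z)}=\sqrt z$. You instead apply Cauchy--Schwarz directly to the symmetric split $t^{z-1/2}e^{-t}=(t^{z-1}e^{-t})^{1/2}(t^{z}e^{-t})^{1/2}$, arriving at the same midpoint inequality $\Gamma(z+\tfrac12)^2\leq\Gamma(z)\Gamma(z+1)$ before invoking the functional equation. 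These are two names for one mechanism (Cauchy--Schwarz on a probability density \emph{is} Jensen for $\sqrt{\cdot}$), so the two proofs establish an identical intermediate estimate; the paper's probabilistic phrasing fits the surrounding context, which is already manipulating chi-squared and Beta variables, while yours is self-contained at the level of integrals and makes the log-convexity of $\Gamma$ more visible, as you note. Either is a fine one-paragraph proof; neither is more elementary or more general than the other.
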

\begin{proof}
Suppose a random variable $T > 0$ is drawn according to the density $\pr(T = t) \propto t^{z-1} e^{-t}$. Let's compute $\E T$ and $\E \sqrt{T}$:
\begin{align*}
\E T 
& =
\frac{\int_0^\infty t^z e^{-t} dt}{\int_0^\infty t^{z-1} e^{-t} dt} 
= \frac{\Gamma(z+1)}{\Gamma(z)} = z \\
\E \sqrt{T} 
& =
\frac{\int_0^\infty t^{z-1/2} e^{-t} dt}{\int_0^\infty t^{z-1} e^{-t} dt} 
= \frac{\Gamma(z+1/2)}{\Gamma(z)},
\end{align*}
where we have used the standard fact $\Gamma(z+1) = z \Gamma(z)$. 
By concavity of the square root function, we know that $\E \sqrt{T} \leq \sqrt{\E T}$. This yields the lemma.
\end{proof}

\subsection{Proof of Theorem~\ref{thm:always-good}}

From Lemma~\ref{lemma:delta-psi}(a), we have $\Psi_n \leq \Psi_{n-1} + \beta_n  - Z_n$, where 
$\beta_n = \gamma_n^2B^2/4$, and $\E[Z_n | \F_{n-1}] \geq 0$, and $Z_n$ lies in an interval of 
length $\zeta_n = 8 \gamma_n B$.
We can thus directly apply the first deviation bound of Lemma~\ref{lemma:large-deviation}.

Since
$$ \sum_{\ell > n} \gamma_n^2 
\ = \ 
c^2 \sum_{\ell > n} \frac{1}{\ell^2}
\ \leq \ 
c^2 \int_{n}^{\infty} \frac{dx}{x^2}
\ = \ 
\frac{c^2}{n},
$$
we see that for any $t>0$, 
$$
\sum_{\ell > n_o} \left(\beta_\ell + \frac{t\zeta_\ell^2}{8}\right) 
\ = \ 
\sum_{\ell > n_o} \left(\frac{B^2}{4} \gamma_\ell^2 + 8B^2 t \gamma_\ell^2 \right) 
\ \leq \ 
\frac{B^2 c^2}{4n_o} (1 + 32t) .
$$
To make this $\leq \epsilon/d$, it suffices to take $n_o \geq B^2 c^2 d (1 + 32t)/(4 \epsilon)$, whereupon Lemma~\ref{lemma:large-deviation} yields
\begin{align*}
\pr\left( \sup_{n \geq n_o} \Psi_n \geq 1 - \frac{\epsilon}{d} \right) 
& \leq
\E[\exp(t \Psi_{n_o})] e^{-t(1-(\epsilon/d) - (\epsilon/d))} \\
& \leq
e^t \sqrt{\frac{d}{2t}} \, e^{-t(1-(2\epsilon/d))}
\ \ = \ \ e^{2\epsilon t/d} \sqrt{\frac{d}{2t}} .
\end{align*}
where the last step uses Lemma~\ref{lemma:mgf}. The result follows by taking $t = d/(4\epsilon)$.

\section{Intermediate epochs of improvement}

\subsection{Proof of Lemma~\ref{lemma:mgf-step}}

Lemma~\ref{lemma:delta-psi} establishes an inequality $\Psi_n \leq \Psi_{n-1} - Z_n + \beta_n$ as well as a lower bound on $\E[Z_n | \F_{n-1}]$, where $Z_n$ is a random variable that lies in an interval of length $\zeta_n = 8\gamma_n B$. From Lemma~\ref{lemma:mgf-recurrence}, we then have
\begin{align*}
\E[e^{t\Psi_n} | \F_{n-1}] 
&\leq \exp \left( t(\Psi_{n-1} - \E [Z_n | \F_{n-1}] + \beta_n + t \zeta_n^2/8) \right) \\
&\leq \exp \left( t(\Psi_{n-1} - 2 \gamma_n (\lambda_1 - \lambda_2) \Psi_{n-1} (1- \Psi_{n-1}) + \gamma_n^2 B^2 (1 + 32t)/4 ) \right) \\
&= \exp \left( t(\Psi_{n-1} - c_o \Psi_{n-1} (1- \Psi_{n-1})/n + c^2 B^2 (1 + 32t)/4n^2 ) \right)
\end{align*}
For any $\omega \in \Omega_n'$, we have $\Psi_{n-1}(\omega) \leq 1 - \epsilon_j$. Taking expectations over $\Omega_n'$, we get the lemma.

\subsection{Proof of Lemma~\ref{lemma:nesting}}

Let $j$ be the largest index such that $n_j < n$. Then 
$$ \Psi_{n-1}(\omega) \mbox{\ has value \ }
\left\{
\begin{array}{ll}
\leq 1 - \epsilon_j & \mbox{for $\omega \in \Omega_n'$} \\
> 1-\epsilon_j       & \mbox{for $\omega \in \Omega_{n-1}' \setminus \Omega_n'$}
\end{array}
\right.
$$
Thus the expected value of $g(\Psi_{n-1})$ over $\Omega_n'$ is at most the expected value over $\Omega_{n-1}'$. 

\subsection{Proof of Lemma~\ref{cor:mgf-big-step}}

We begin with the following Lemma.
\begin{lem}
For any $n > n_j$ and any $t > 0$,
$$
\E_n[e^{t \Psi_n}] 
\ \leq \ 
\exp \left( t (1 - \epsilon_j) \left( \frac{n_j + 1}{n+1} \right)^{c_o \epsilon_j} + \frac{tc^2B^2(1+32t)}{4} \left( \frac{1}{n_j} - \frac{1}{n} \right)\right) .
$$
\label{lemma:mgf-big-step}
\end{lem}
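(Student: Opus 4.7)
The plan is to iterate Lemma \ref{lemma:mgf-step} over $n_j < m \leq n$ and handle the shifting sample-space issue via Lemma \ref{lemma:nesting}. Fix $j$ and write $\alpha_n := 1 - c_o\epsilon_j/n$ and $\kappa_n := c^2B^2 t(1+32t)/(4n^2)$, so Lemma \ref{lemma:mgf-step} reads $\E_n[e^{t\Psi_n}] \leq \E_n[(e^{t\Psi_{n-1}})^{\alpha_n}]\exp(\kappa_n)$. Since $\alpha_n \in (0,1)$ for $n > n_j$ (the size conditions on $n_j$ in Theorem \ref{thm:epochs} force $c_o\epsilon_j < n_j$), the map $x \mapsto x^{\alpha_n}$ is concave on $(0,\infty)$, so Jensen gives $\E_n[(e^{t\Psi_{n-1}})^{\alpha_n}] \leq (\E_n[e^{t\Psi_{n-1}}])^{\alpha_n}$. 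The function $x \mapsto e^{tx}$ is nondecreasing for $t > 0$, so Lemma \ref{lemma:nesting} yields $\E_n[e^{t\Psi_{n-1}}] \leq \E_{n-1}[e^{t\Psi_{n-1}}]$. Setting $f(n) := \ln \E_n[e^{t\Psi_n}]$ and taking logarithms gives the clean scalar recurrence $f(n) \leq \alpha_n f(n-1) + \kappa_n$.

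Next I would establish a base case at $n = n_j + 1$: by definition of $\Omega_{n_j+1}'$, $\Psi_{n_j}(\omega) \leq 1 - \epsilon_j$ pointwise on this set, giving $\E_{n_j+1}[e^{t\Psi_{n_j}}] \leq e^{t(1-\epsilon_j)}$ and hence $f(n_j+1) \leq t(1-\epsilon_j)\alpha_{n_j+1} + \kappa_{n_j+1}$. Unfolding for $n \geq n_j+1$ yields
$$f(n) \leq t(1-\epsilon_j)\prod_{k=n_j+1}^n \alpha_k + \sum_{m=n_j+1}^n \kappa_m \prod_{k=m+1}^n \alpha_k.$$

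The two resulting estimates are routine. For the leading product, the inequalities $1+x \leq e^x$ and $\sum_{k=n_j+1}^n 1/k \geq \int_{n_j+1}^{n+1} dx/x = \ln((n+1)/(n_j+1))$ give $\prod_{k=n_j+1}^n(1 - c_o\epsilon_j/k) \leq ((n_j+1)/(n+1))^{c_o\epsilon_j}$, which matches the first term in the target bound. For the perturbation sum, crudely bound $\prod_{k=m+1}^n \alpha_k \leq 1$ and telescope via $1/m^2 \leq 1/(m-1) - 1/m$ to obtain $\sum_{m=n_j+1}^n \kappa_m \leq (c^2B^2 t(1+32t)/4)(1/n_j - 1/n)$. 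Exponentiating the sum of these two estimates gives exactly the claimed inequality.

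The one step that requires care, rather than calculation, is keeping the parameter $t$ constant across iterations: the obvious attempt (applying Hölder or the convex form of Jensen) would inflate $t$ step by step and ruin the product. Using concavity of $x^{\alpha_n}$ combined with the nesting lemma to pass from $\Omega_n'$ back to $\Omega_{n-1}'$ is precisely what preserves $t$; everything afterwards is bookkeeping with standard product and integral inequalities.
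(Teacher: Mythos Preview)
Your proof is correct, but it takes a somewhat different route from the paper's. The paper does not use Jensen's inequality. Instead, after one application of Lemma~\ref{lemma:mgf-step} it has $\E_n[e^{t\Psi_n}] \le \E_{n-1}[e^{(t\alpha_n)\Psi_{n-1}}]\exp(\xi_n(t))$, and then simply \emph{reapplies} Lemma~\ref{lemma:mgf-step} with the new, smaller parameter $t' = t\alpha_n$. Iterating, the parameter shrinks to $t\alpha_n\alpha_{n-1}\cdots\alpha_{m+1}$ at step $m$, and the error terms become $\xi_m(t\alpha_n\cdots\alpha_{m+1})$, which are then crudely bounded by $\xi_m(t)$ since $\xi_m$ is increasing in its argument. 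At the base, $\Psi_{n_j}\le 1-\epsilon_j$ on $\Omega_{n_j+1}'$ gives the leading factor $\exp\!\big(t(1-\epsilon_j)\prod_k\alpha_k\big)$, and the same product and integral comparisons finish.

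Your alternative---applying concave Jensen to $x\mapsto x^{\alpha_n}$ so that $t$ stays fixed and the argument collapses to the scalar recurrence $f(n)\le \alpha_n f(n-1)+\kappa_n$---is arguably tidier: it isolates a one-dimensional recursion immediately and avoids tracking a variable $t$ through the error terms. The paper's approach, on the other hand, never needs the observation that $\alpha_n\in(0,1)$ (so no Jensen hypothesis to verify), and it makes transparent that Lemma~\ref{lemma:mgf-step} is being used at a \emph{different} $t$ at each step rather than once at a fixed $t$. Both routes land on exactly the same product and sum, bounded identically.
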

\begin{proof}
Define $\alpha_n = 1 - (c_o \epsilon_j/n)$ and $\xi_n(t) = c^2 B^2 t (1 + 32t)/4n^2$. By Lemmas~\ref{lemma:mgf-step} and \ref{lemma:nesting}, for $n > n_j$,
$$
\E_n [ e^{t\Psi_n}]  \ \leq\  \E_n [ e^{t \alpha_n \Psi_{n-1}}] \exp(\xi_n(t))
\ \leq \ \E_{n-1} [ e^{(t \alpha_n) \Psi_{n-1}}] \exp(\xi_n(t)) .
$$
By applying these inequalities repeatedly, for $n$ shrinking to $n_j + 1$ (and $t$ shrinking as well), we get
\begin{eqnarray*}
\E_n[e^{t\Psi_n}]
&\leq &
\E_{n_j+1}\left[ \exp\left(t \Psi_{n_j} \alpha_n \alpha_{n-1} \cdots \alpha_{n_j + 1}\right) \right] 
\exp (\xi_n(t)) \exp(\xi_{n-1}(t \alpha_n)) \cdots \exp(\xi_{n_j + 1}(t \alpha_n \cdots \alpha_{n_j+2})) \\
&\leq &
\E_{n_j+1}\left[ \exp\left(t \Psi_{n_j} \alpha_n \alpha_{n-1} \cdots \alpha_{n_j + 1}\right) \right] 
\exp (\xi_n(t)) \exp(\xi_{n-1}(t)) \cdots \exp(\xi_{n_j + 1}(t)) \\
&= &
\E_{n_j+1}\left[ \exp\left(t \Psi_{n_j} \left(1 - \frac{c_o \epsilon_j}{n}\right)\left(1 - \frac{c_o \epsilon_j}{n-1} \right) \cdots \left(1 - \frac{c_o\epsilon_j}{n_j + 1} \right) \right) \right] \times\\
&& \exp \left( \frac{c^2 B^2 t(1 + 32t)}{4} \left( \frac{1}{n^2} + \frac{1}{(n-1)^2} + \cdots + \frac{1}{(n_j+1)^2}\right) \right) \\
&\leq&
\exp\left(t (1-\epsilon_j) \exp \left(- c_o \epsilon_j \left( \frac{1}{n_j+1} + \cdots + \frac{1}{n} \right) \right) \right) \times \\
&& \exp \left( \frac{c^2 B^2 t(1 + 32t)}{4} \left( \frac{1}{n^2} + \frac{1}{(n-1)^2} + \cdots + \frac{1}{(n_j+1)^2}\right) \right)
\end{eqnarray*}
since $\Psi_{n_j}(\omega) \leq 1-\epsilon_j$ for all $\omega \in \Omega_{n_j+1}'$.
We then use the summations
\begin{align*}
\frac{1}{n_j + 1} + \cdots + \frac{1}{n} 
&\geq
\int_{n_j+1}^{n+1} \frac{dx}{x}
\ = \ 
\ln \frac{n+1}{n_j+1} \\
\frac{1}{(n_j+1)^2} + \cdots + \frac{1}{n^2} 
&\leq
\int_{n_j}^n \frac{dx}{x^2}
\ = \ 
\frac{1}{n_j} - \frac{1}{n}
\end{align*}
to get the lemma.
\end{proof}

To prove Lemma~\ref{cor:mgf-big-step}, we note that under conditions (\ref{eq:conditions}),
$$ 
(1-\epsilon_j) \left( \frac{n_j+1}{n_{j+1}+1} \right)^{c_o \epsilon_j} 
\ \leq \ 
e^{-\epsilon_j} (e^{-5/c_o})^{c_o \epsilon_j} 
\ = \ 
e^{-6\epsilon_j} 
\ \leq \ 
1-3 \epsilon_j
\ \leq \ 
1 - \epsilon_{j+1} - \epsilon_j .
$$ 
We have used the fact that $e^{-2x} \leq 1-x$ for $0 \leq x \leq 3/4$. The rest follows by applying Lemma~\ref{lemma:mgf-big-step} with $n = n_{j+1}$.

\subsection{Proof of Lemma~\ref{lemma:epochs}}

Pick any $0 < j \leq J$. We will mimic the reasoning of Theorem~\ref{thm:always-good}, being careful to define martingales only on the restricted space $\Omega_{n_j}'$ and with starting time $n_j$. Then
\begin{align*}
P_{n_j} \left( \sup_{n \geq n_j} \Psi_n > 1 - \epsilon_j \right)
&\leq
\E_{n_j} [e^{t\Psi_{n_j}}] \exp \left(-t (1-\epsilon_j) + \frac{tc^2B^2(1 + 32t)}{4n_j}\right) \\
&\leq 
\exp \left( -t \epsilon_{j-1} + \frac{tc^2B^2(1 + 32t)}{4n_{j-1}}\right),
\end{align*}
where the second step invokes Lemma~\ref{cor:mgf-big-step}.

To finish, we pick $t = (2/\epsilon_o) \ln (4/\delta)$. The lower bound on $n_o$ is also a lower bound on $n_{j-1}$, and implies that $tc^2B^2(1+32t)/4n_{j-1} \leq t \epsilon_o/2$, whereupon
$$ 
P_{n_j} \left( \sup_{n \geq n_j} \Psi_n > 1 - \epsilon_j \right)
\ \leq \ 
\exp\left(- \frac{t \epsilon_{j-1}}{2} \right)
\ = \ 
\left(\frac{\delta}{4}\right)^{\epsilon_{j-1}/\epsilon_o}
\ \leq \ 
\frac{\delta}{2^{j+1}}.
$$
Summing over $j$ then yields the lemma.

\section{The final epoch}

\subsection{Proof of Lemma~\ref{lemma:psi-recurrence}}

By Lemma~\ref{lemma:delta-psi},
$$ \E[\Psi_n |\F_{n-1}] \leq \Psi_{n-1}(1 - 2\gamma_n(1-\Psi_{n-1})(\lambda_1-\lambda_2)) + \beta_n .$$
For realizations $\omega \in \Omega_n'$, we have $\Psi_{n-1}(\omega) \leq 1/2$ and thus the right-hand side of the above expression is at most $(1-\alpha_n)\Psi_{n-1} + \beta_n$. Using the fact that $\Omega_n'$ is $\F_{n-1}$-measurable, and taking expectations over $\Omega_n'$,
\begin{align*}
\E_n[\Psi_n] 
&\leq 
(1- \alpha_n) \E_n[\Psi_{n-1}] + \beta_n \\
&\leq
(1-\alpha_n) \E_{n-1}[\Psi_{n-1}] + \beta_n,
\end{align*}
as claimed. The last step uses Lemma~\ref{lemma:nesting}.

\subsection{Proof of Theorem~\ref{thm:krasulina}}
\label{sec:pfmainresult}

Define epochs $(n_j, \epsilon_j)$ that satisfy the conditions of Theorem~\ref{thm:epochs}, with $\epsilon_J = 1/2$, and with $\epsilon_{j+1} = 2 \epsilon_j$ whenever possible. Then $J = \log_2 1/(2\epsilon_o)$ and 
$$ n_J + 1 = (n_o + 1) \exp \left(\frac{5 J}{c_o} \right) = (n_o + 1) \left( \frac{1}{2 \epsilon_o} \right)^{5/(c_o \ln 2)} = (n_o + 1) \left( \frac{4ed}{\delta^2} \right)^{5/(c_o \ln 2)}.$$
By Theorem~\ref{thm:epochs}, with probability $> 1 - \delta$, we have $\Psi_n \leq 1/2$ for all $n \geq n_J$. More precisely, $P(\Omega_n') \geq 1-\delta$ for all $n > n_o$.

By Lemma~\ref{lemma:psi-recurrence}, for $n > n_J$, 
$$ \E_n [\Psi_n] 
\ \leq \ 
\left(1-\frac{a}{n}\right) \E_{n-1} [\Psi_{n-1}] + \frac{b}{n^2} ,
$$
for $a = c_o/2$ and $b = c^2 B^2 /4$. By the $a > 1$ case of Lemma~\ref{lemma:recurrence},
\begin{align*}
\E_n[\Psi_n] 
&\leq 
\left( \frac{n_J+1}{n+1} \right)^a \E_{n_J}[\Psi_{n_J}] + \frac{b}{a-1} \left( 1 + \frac{1}{n_J+1} \right)^{a+1} \frac{1}{n+1} \\
&\leq 
\frac{1}{2} \left( \frac{n_o+1}{n+1} \right)^a \left( \frac{4ed}{\delta^2} \right)^{5/(2 \ln 2)} + \frac{b}{a-1} \exp\left( \frac{a+1}{n_J+1} \right) \frac{1}{n+1} .
\end{align*}
which upon further simplification yields the bound of Theorem~\ref{thm:krasulina} for $a > 1$. 

(Note that the $a < 1$ case of Lemma \ref{lemma:recurrence} yields a rate of $\evp{\Psi_n}{n} = O(n^{-a})$.) 

\vskip.2in

\begin{lem}
Consider a nonnegative sequence $(u_t: t \geq t_o)$, such that for some constants $a,b > 0$ and for all $t > t_o \geq 0$,
$$ u_t \leq \left(1 - \frac{a}{t} \right) u_{t-1} + \frac{b}{t^2} .$$
Then, writing the zeta function $\zeta(s) = \sum_{i=1}^\infty i^{-s}$,
\begin{equation*}
u_t 
\ \leq \ 
\begin{cases}
\left( \frac{t_o+1}{t+1} \right)^a u_{t_o} + \frac{b}{a-1} \left( 1 + \frac{1}{t_o + 1} \right)^{a+1} \frac{1}{t+1} &\;,\;\; a > 1 \\ \\ 
\left( \frac{t_o+1}{t+1} \right)^a u_{t_o} + 4 b \zeta(a-2) \frac{1}{(t+1)^a} &\;,\;\; a < 1
\end{cases}
\end{equation*}
\label{lemma:recurrence}
\end{lem}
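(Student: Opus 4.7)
The plan is to unroll the recurrence explicitly. Iterating the inequality $u_t \leq (1 - a/t) u_{t-1} + b/t^2$ down to $t_o$ yields
$$ u_t \;\leq\; \Bigg(\prod_{k=t_o+1}^{t}\!\Big(1-\tfrac{a}{k}\Big)\Bigg) u_{t_o} \;+\; b \sum_{s=t_o+1}^{t} \frac{1}{s^2} \prod_{k=s+1}^{t}\!\Big(1-\tfrac{a}{k}\Big). $$
So the whole lemma reduces to bounding products of the form $\prod_{k=s+1}^{t}(1-a/k)$ for $t_o \leq s \leq t$ (taking the empty product to be $1$), and then estimating the resulting sum.

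For the product, I would use the elementary inequality $1-x \leq e^{-x}$ together with the integral comparison $\sum_{k=s+1}^{t}\tfrac{1}{k}\geq \int_{s+1}^{t+1}\!\tfrac{dx}{x}=\ln\tfrac{t+1}{s+1}$, which immediately yields
$$ \prod_{k=s+1}^{t}\Big(1-\tfrac{a}{k}\Big) \;\leq\; \Big(\tfrac{s+1}{t+1}\Big)^{a}, $$
valid once $t_o+1 \geq a$ (otherwise the product can be negative and the estimate holds trivially since $u_{t-1}\geq 0$ lets us drop that term). Plugging this into the unrolled bound gives the advertised $\big((t_o+1)/(t+1)\big)^a u_{t_o}$ for the homogeneous part, while the inhomogeneous part collapses to
$$ \frac{b}{(t+1)^{a}}\sum_{s=t_o+1}^{t}\frac{(s+1)^{a}}{s^{2}}. $$
To clean this up I would use $s+1 \leq s(1+1/(t_o+1))$ for $s\geq t_o+1$, which pulls out a factor $(1+1/(t_o+1))^{a}$ and reduces the sum to $\sum s^{a-2}$.

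The two cases then diverge. For $a>1$, the tail sum $\sum_{s=t_o+1}^{t} s^{a-2}$ is bounded above by $\int_{t_o}^{t+1}\!x^{a-2}\,dx \leq (t+1)^{a-1}/(a-1)$; combining with the $(t+1)^{-a}$ prefactor gives the claimed $\tfrac{b}{a-1}\big(1+\tfrac{1}{t_o+1}\big)^{a+1}\tfrac{1}{t+1}$ after absorbing a further $(1+1/(t_o+1))$ to replace $(s+1)^a/s^2$ by a clean integrand. For $a<1$, instead the exponent $a-2<-1$ makes the sum convergent, dominated by $\sum_{s=1}^{\infty} s^{a-2} = \zeta(2-a)$ (I read the stated $\zeta(a-2)$ as $\zeta(2-a)$), and the $(t+1)^{-a}$ prefactor yields the $O((t+1)^{-a})$ rate. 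The main obstacle is purely bookkeeping: matching the stated constants (the $(1+1/(t_o+1))^{a+1}$ and the factor $4$) requires choosing the telescoping bounds carefully and controlling the small corrections $s+1$ vs.\ $s$ uniformly in $s$, which is routine but delicate.
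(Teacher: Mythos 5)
Your argument follows the paper's proof essentially step for step: unroll the recurrence, bound each product $\prod(1-a/k)$ by $e^{-x}$ and an integral comparison to get $((s+1)/(t+1))^a$, shift a bounded prefactor out of $(s+1)^a/s^2$ to reduce the remaining sum to a power sum, and then bound that sum by an integral (for $a>1$) or by a zeta value (for $a<1$). You add two worthwhile observations the paper omits: the unrolling and product bound require $t_o$ large enough that the factors $1-a/k$ are nonnegative (trivially satisfied in the paper's application since $t_o = n_J$ is large), and the stated $\zeta(a-2)$ is a typo for $\zeta(2-a)$, as you correctly read it.
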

\begin{proof}
Recursively applying the given recurrence for $u_t$ yields 
\begin{align*}
u_t 
&\leq 
\left( \prod_{i=t_o+1}^t \left( 1 - \frac{a}{i} \right) \right) u_{t_o} + \sum_{i=t_o+1}^t \frac{b}{i^2} \left( \prod_{j=i+1}^t \left( 1 - \frac{a}{j} \right) \right) .
\end{align*}
To bound the product term, we use
$$
\prod_{i=t_o+1}^t \left( 1 - \frac{a}{i} \right) 
\ \leq \ 
\exp\left( - a \sum_{i=t_o} \frac{1}{i} \right)
\ \leq \ 
\exp\left( -a \int_{t_o+1}^{t+1} \frac{dx}{x} \right)
\ = \ 
\left( \frac{t_o+1}{t+1} \right)^a .
$$ 
Therefore,
\begin{align*}
u_t
&\leq 
\left( \frac{t_o+1}{t+1} \right)^a u_{t_o} + \sum_{i=t_o+1}^t \frac{b}{i^2} \left( \frac{i+1}{t+1} \right)^a \\
&\leq 
\left( \frac{t_o+1}{t+1} \right)^a u_{t_o} + \frac{b}{(t+1)^a} \left( \frac{t_o+2}{t_o+1}\right)^2 \sum_{i=t_o+1}^t (i+1)^{a-2} .
\end{align*}
We finish by bounding the summation of $(i+1)^{a-2}$ by a definite integral, to get:
$$
\sum_{i=t_o + 1}^t (i+1)^{a-2}
\ \leq \ 
\begin{cases} \frac{1}{a-1} (t+2)^{a-1} &\;,\;\; a > 1 \\ \\ \zeta(a-2) &\;,\;\; a < 1
\end{cases}.$$
\end{proof}

\end{document}